\newcommand{\Cestt}{\mthC_{\sct}}
\newcommand{\corr}{(\Letter)}
\newtheorem{Assumption}{As.}
\begin{document}

\title{Spatiotemporal Covariance Neural Networks}


\author{Andrea Cavallo \corr \and
Mohammad Sabbaqi \and
Elvin Isufi}

\authorrunning{A. Cavallo et al.}

\institute{Delft University of Technology, Delft, Netherlands \\ \email{\{a.cavallo,m.sabbaqi,e.isufi-1\}@tudelft.nl}
}

\tocauthor{Andrea~Cavallo, Mohammad~Sabbaqi, Elvin~Isufi}

\toctitle{Spatiotemporal Covariance Neural Networks}

\maketitle              

\begin{abstract}

Modeling spatiotemporal interactions in multivariate time series is key to their effective processing, but challenging because of their irregular and often unknown structure. Statistical properties of the data provide useful biases to model interdependencies and are leveraged by correlation and covariance-based networks as well as by processing pipelines relying on principal component analysis (PCA). 
However, PCA and its temporal extensions suffer instabilities in the covariance eigenvectors when the corresponding eigenvalues are close to each other, making their application to dynamic and streaming data settings challenging.  
To address these issues, we exploit the analogy between PCA and graph convolutional filters to introduce the SpatioTemporal coVariance Neural Network (STVNN), a relational learning model that operates on the sample covariance matrix of the time series and leverages joint spatiotemporal convolutions to model the data. To account for the streaming and non-stationary setting, we consider an online update of the parameters and sample covariance matrix. We prove the STVNN is stable to the uncertainties introduced by these online estimations, thus improving over temporal PCA-based methods.  
Experimental results corroborate our theoretical findings and show that STVNN is competitive for multivariate time series processing, it adapts to changes in the data distribution, and it is orders of magnitude more stable than online temporal PCA.

\keywords{Covariance Neural Networks \and Multivariate Time Series \and Online Learning \and Principal Component Analysis}
\end{abstract}

\section{Introduction}

Learning representations from multivariate time series is inherently challenging because of their spatiotemporal coupling, but it is relevant for a wide range of applications including multiple-location weather measurements~\cite{sanhudo2021mtsweather}, state-evolution in infrastructure networks such as water~\cite{kerimov2023assessing}, power~\cite{habib2024power} or transportation~\cite{Jiang_2022gnntraffic}, and brain activity in neuroimaging~\cite{pourahmadi2016brain}. When the data structure is irregular and unknown, correlation or covariance-based networks~\cite[Chapter 7.3.1]{kolaczyk2009statistical},\cite{bessadok2022graph,brooks2020timeseries,cardoso2020algorithms} are used as a proxy to capture their inter-dependencies, with the popular example of graphical lasso~\cite{friedman2007lasso}, among others. Learning representations with these networks as inductive biases is tied to dimensionality reduction techniques such as principal component analysis (PCA), which transforms the input to maximize the variance among data points~\cite{cardot2018online,Jolliffe2002pca}.

However, PCA on temporal data has three fundamental challenges. First, it cannot capture temporal dependencies among data points and temporal distribution shifts. Second, it works with batches of data, which are not available in a streaming setting. Third, it suffers instabilities in the principal components corresponding to eigenvalues that are close to each other~\cite{Jolliffe2016PrincipalCA}. To overcome the first two challenges, online and temporal variants of PCA have been proposed~\cite{cardot2018online,Jolliffe2002pca}. A simple instance of temporal PCA is in~\cite[Chapter 12.2]{Jolliffe2002pca}, which concatenates previous temporal samples and projects them on the eigenspace of their covariance matrix to capture spatiotemporal correlations. This requires working with a bigger spatiotemporal covariance matrix that affects scalability. Online PCA algorithms, instead, observe data samples one at a time and update the covariance matrix estimate or its spectral components as new samples appear~\cite{cardot2018online}.
However, the online update introduces additional uncertainty in the downstream PCA-based model, whose parameters are optimized on the limited portion of observed data. {To overcome these issues, we exploit the link between PCA and graph filtering~\cite{marques2017stationary,rui2016dimensionality,sihag2022covariance} to propose a SpatioTemporal coVariance Neural Network (STVNN) that builds on graph neural network principles and uses the covariance matrix as a graph representation matrix. This connection has been recently exploited in~\cite{sihag2022covariance} for static tabular data, showing enhanced stability compared to conventional PCA. However, the model in~\cite{sihag2022covariance} does not consider temporal and streaming data, and trivial extensions such as using temporal data as node features fail to effectively leverage the complex spatiotemporal interactions, ultimately affecting performance and deteriorating stability as we shall elaborate in Sec.~\ref{sec_stab}.
Motivated by this observation, we rely on two-dimensional spatiotemporal convolutional principles and on the covariance matrix to process the principal components of the time series.

\smallskip
\noindent\textbf{Contributions.} We investigate spatiotemporal covariance-based graph convolutional networks for multivariate time series to learn on-the-fly representations that are suitable for a streaming setting, adapt to data distribution shifts, and are robust to estimation errors in both the parameter and covariance matrix update. Our specific contribution is threefold.

\begin{enumerate}[label=(\textbf{C\arabic*)}, start = 1]
    \item \textbf{Principled architecture.} We define STVNN, a temporal graph neural network for multivariate time series that employs the sample covariance matrix as graph structure. STVNN is rooted in the convolution principle and draws analogies with temporal PCA. Differently from downstream models building on PCA, STVNN learns representations directly by acting on the principal components in each layer. We develop an online learning update to account for streaming data and distribution shifts.
    \item \textbf{Stability analysis.}  We prove that the convolution-based design of STVNN provides stability to uncertainties in the online estimation of covariance matrix and model parameters, even in the presence of close covariance eigenvalues, at the cost of discriminability. Moreover, the size of temporal information is significantly less harmful for stability compared to alternative models. 
    
    \item \textbf{Empirical evidence.} 
    We corroborate our theoretical findings with experiments on real and synthetic datasets. Our results show that STVNN is substantially more robust, adapts better to distribution shifts, and achieves a better multi-step forecasting performance than alternatives relying on PCA or covariance neural networks that ignore a joint spatiotemporal processing. Our code is available at \url{https://github.com/andrea-cavallo-98/STVNN}.
\end{enumerate}

\section{Related Works}

This work stands at the intersection of spatiotemporal modeling, streaming multivariate time series, principal component analysis, and graph neural networks (GNNs). Here, we position it w.r.t. prior art.

\smallskip
\noindent\textbf{PCA and graphs.}
The analogy between PCA and graph filters has been studied in the field of graph signal processing.
The works in~\cite{marques2017stationary,perraudin2017stationary} discuss stationarity for graph signals, which graph Fourier transform links to PCA. The connection between PCA and graph filters is exploited also in~\cite{rui2016dimensionality,shahid2016fast} for dimensionality reduction of graph data and designing PCA on graphs, respectively. To improve upon the PCA instabilities, the work in~\cite{sihag2022covariance} uses the covariance matrix as a graph shift operator to develop a neural network architecture which is more expressive and stable than PCA itself. This approach is further studied in~\cite{sihag2023transferablility} and used for brain data application~\cite{sihag2024explainable}. Yet, all these works consider static data.

\smallskip
\noindent\textbf{Online and temporal PCA.}
\label{sec:tpca}
Online PCA deals with streaming data by projecting them onto the estimated principal directions, which are then updated.
Different categories of online PCA approaches exist such as perturbation methods, incremental PCA, and stochastic approximation methods~\cite{brand2002incremental,cardot2018online}.
Instead, temporal PCA models interactions among data across time and space~\cite{Jolliffe2002pca}. One of the most common ideas is to consider, at each time instant, a previously observed batch of time samples to account for the dynamic evolution of the data. Then, temporal PCA estimates a spatiotemporal covariance matrix accounting for the correlation among all space-time locations. This leads to prohibitive costs and limited temporal memory. To overcome these limitations, we propose an alternative based on spatiotemporal graph convolution principles using the covariance matrix.

\smallskip
\noindent\textbf{Spatiotemporal GNNs.}
GNNs have become a popular tool for processing time series because of their ability to exploit known or latent relations when a consistent structure is available.
However, in some cases, these relations either change through time or a better estimation of them is at hand by observing new data.
To tackle the changes in the structure, \cite{wu2020connecting} connects time series with an attention mechanism, whereas~\cite{jin2022multivariate} proposes an ODE-based approach to learn an adjacency matrix among time series.
In other works, either conventional Fourier transform~\cite{yi2023frequencydomain} or graph Fourier transform~\cite{cao2020spectral,yi2023fouriergnn} is used.
To account for the incoming data, online learning methods are employed to adapt the predictors and the structure.
The work in~\cite{saadallah2023online} implements shift detectors to trigger model updates, while~\cite{pham2022learning} keeps a memory of relevant previous information and~\cite{guo2016robust} detects and excludes outliers from the updates to avoid catastrophic forgetting. 
Here, we study the effectiveness of the covariance matrix as an inductive bias to model data interdependencies, we exploit a principled online update of it to handle streaming data, and we characterize the impact of the estimation error.

\section{Problem Formulation}
\label{sec:problem_formulation}

We are interested in processing multivariate time series $\vcx_t\in\mathbb{R}^N$ comprising $N$ variables evolving in vectors $\ldots\vcx_{-1}, \vcx_0, \vcx_1,\ldots$ by taking into account their mean vector $\vcmu$ and covariance matrix $\mtC$, which under stationarity are defined as
\begin{equation}
    \vcmu=\mathbb{E}[\vcx_t]~~\textnormal{and}~~\mtC=\mathbb{E}[(\vcx_t-\vcmu)(\vcx_{t}-\vcmu)^\Tr].
\end{equation}
More specifically, we want to learn a function $\fnPhi(\vcx_{T:t}, \mtC; \vch)$ that takes as input a temporal memory of size $T$, $\vcx_{T:t} = \{\vcx_{t-T+1}, \ldots, \vcx_t\}$, and the covariance matrix $\mtC$ and maps them to a target output $\vcy$, which may be a future instance at horizon $\tau > 0$, i.e., $\vcy = \vcx_{t+\tau}$, or a class label. Here, $\vch$ comprises the parameters of function $\fnPhi(\cdot)$. The rationale to account for $\mtC$ is to exploit it as an inductive bias so as to reduce the parameters and computational complexity of the model in contrast to a model that learns directly from the data (e.g., an LSTM network), as well as develop a principled solution rooted in PCA.

However, in practice, we do not have access to the true covariance matrix $\mtC$ nor to a batch of data to accurately estimate it. Instead, we observe the evolution of the time series in a streaming fashion and update recursively the estimates of the mean $\vchmu_{t}$ and covariance matrix $\Cestt$ as
\begin{equation}
\label{eq:update_stationary}
\begin{gathered}
        \vchmu_{\sct+1} = \alpha_t\vchmu_{\sct} + \beta_t\vcx_{\sct+1} \\
    \mthC_{\sct+1} = \xi_t\mthC_{\sct} + \zeta_t(\vcx_{\sct+1}-\vchmu_{\sct})(\vcx_{\sct+1}-\vchmu_{\sct})^\Tr
\end{gathered}
\end{equation}
where $\alpha_t, \beta_t, \xi_t, \zeta_t \in [0,1]$ are scalars. The update in \eqref{eq:update_stationary} accounts for both the stationary setting -- $\alpha_t=t/(t+1)$, $\beta_t=\zeta_t=1/(t+1)$, and $\xi_t=(t-1)/t$ -- as well as the non-stationary setting -- $\alpha_t = \xi_t = 1-\gamma$ and $\beta_t = \zeta_t = \gamma$ with scalar $\gamma\in[0,1]$ regulating the contributions of the more recent and past data. 
Consequently, we also want to update the model parameters $\vch_{t+1}$ on-the-fly based on the current estimate $\vch_{t}$ and on the loss at time $t$ $\mathcal{L}(\fnPhi(\vcx_{T:t},\Cestt; \vch_t), \vcy)$.

\smallskip
\noindent\textbf{Relation to PCA and graph filtering.} The classical way to learn representations from the time series by accounting for their covariance matrix $\Cestt$ is to extract their principal components and use them as representative features for a downstream model. These features are achieved by first taking the eigendecomposition $\Cestt = \mthV\mathbf{\hat{\Lambda}}\mthV^\Tr$ with eigenvectors $\mthV$ and diagonal matrix of eigenvalues $\mathbf{\hat{\Lambda}}$, and then projecting vector $\vcx_t$ onto the eigenspace as $\vctx_t:=\mthV^\Tr\vcx_t$. However, this approach is sensitive to the statistical uncertainty that occurs when the number of streaming data points is limited or when the principal components correspond to two close eigenvalues~\cite{jolliffe1989illdefined,Jolliffe2016PrincipalCA}. Our objective is to build on the PCA principle by relying on the eigenspace of the covariance matrix but improve its robustness when dealing with streaming data. This will be achieved by relying on graph filtering principles~\cite{isufi2024gsp} as we elaborate next.

We can consider $N$-dimensional series $\vcx_t$ as time-varying signals associated to an $N$-node undirected graph. Thus, the covariance matrix $\mthC_{\sct}$ in \eqref{eq:update_stationary} becomes equivalent to its time-varying weighted adjacency matrix, which in a stationary setting converges to the true covariance matrix $\mtC$. Consequently, the PCA at time $t$ is equivalent to the graph Fourier transform w.r.t. the adjacency matrix $\mthC_{\sct}$~\cite{rui2016dimensionality,sihag2022covariance}. In turn, preserving a few of the signal principal components is equivalent to a spectral graph filtering operation on $\vctx_t$, which has been analyzed in graph signal processing under the lens of \emph{graph} stationarity~\cite{marques2017stationary}. This approach is inapplicable due to the high computational cost associated with the eigendecomposition and it limits the scope to a transductive setting. To overcome this, we rely on the graph convolutional principle that operates directly in node domain and can achieve a higher robustness than PCA as evidenced in~\cite{sihag2022covariance} for tabular data. Specifically, an order $K$ graph convolutional filter operating on $\vcx_t$ gives the output $\vcz_t = \sum_{k=0}^Kh_k\mthC_{\sct}^k\vcx_t = \mtH(\mthC_{\sct})\vcx_t$ where $h_k$ are learnable parameters. By projecting both the filter input and output onto the graph spectral domain, a.k.a. computing the graph Fourier transform, we obtain
\begin{align}
\vctz = \mthV^\Tr\vcz = \mthV^\Tr\sum_{k=0}^Kh_k[\mthV\mathbf{\hat{\Lambda}}\mthV^\Tr]^k\vcx = \sum_{k=0}^Kh_k\mathbf{\hat{\Lambda}}^k\mthV^\Tr\vcx.
\end{align}
For the $i$-th entry, this implies
$[\vctz]_{i} =  \sum_{k=0}^Kh_k\hat{\lambda}_i^k[\vctx]_{i} = h(\hat{\lambda}_i)[\vctx]_{i}$,
i.e., the frequency response of the graph convolutional filter is a polynomial $h(\lambda)$ in the eigenvalues $\hat{\lambda}_i$. The work in~\cite{sihag2022covariance} showed that there exists a combination of parameters $h_k$ such that graph convolutional filters on covariance matrix perform PCA for tabular data. Yet, this is inapplicable to our setting as it ignores the temporal dependencies of the data which are key for spatiotemporal processing.

\begin{remark} 
\label{remark1}
The discussion so far considers only the so-called lag-zero covariance matrix and not the covariances between variables at different times, i.e., $\mtC_\tau=\mathbb{E}[(\vcx_t-\vcmu)(\vcx_{t-\tau}-\vcmu)^\Tr]$. These are relevant to perform temporal PCA~\cite{Jolliffe2002pca} as they allow capturing cross-dependencies between the time series. Yet, this implies working with the eigendecomposition of a bigger matrix (see Appendix~\ref{app:tpca}), which affects scalability. To overcome the latter, we shall rely on a two dimensional filter working only on $\mtC$. Our choice could also be seen as working with temporal independence between elements $\vcx_t,\vcx_{t^\prime}$ when $t^\prime \neq t$ but we shall see this is not the case as our model has an explicit temporal memory [cf. Def.~\ref{def.TVFilter}]. This improves scalability as we will work only with a single covariance matrix rather than with $T$ different covariance matrices and it allows us to draw parallels with the online PCA as well as with the work in \cite{sihag2022covariance}. 
\qed \end{remark}

\section{Spatiotemporal Covariance Neural Networks}

\begin{figure}[t]
     \centering
     \includegraphics[width=\textwidth]{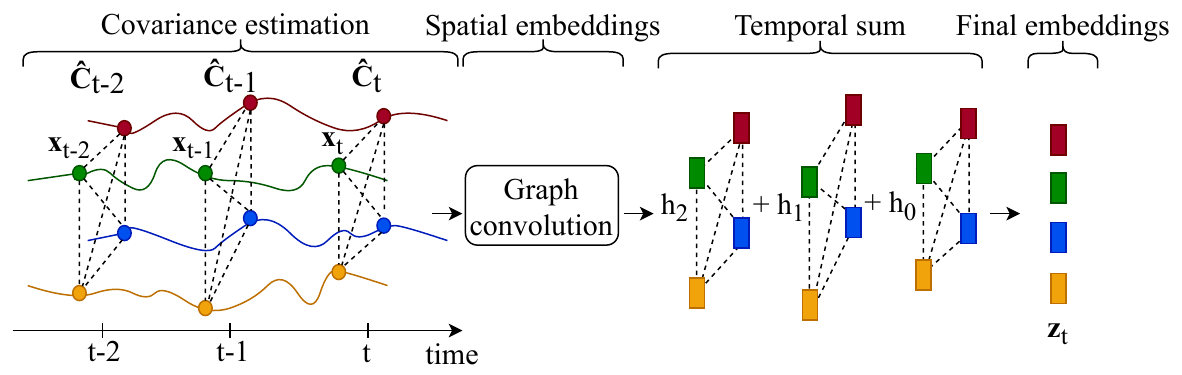}
     \caption{Spatiotemporal covariance filter pipeline. We observe a new time series sample $\vcx_t$ and update the covariance estimate $\Cestt$, which the STVF uses as a weighted graph adjacency matrix to model spatial interactions. Then, we sum the embeddings of the last $T$ samples to model temporal interactions.}
     \label{fig:tvnn}
\end{figure}

Key to developing the spatiotemporal covariance neural network is the SpatioTemporal coVariance Filter (STVF).

\begin{definition}[Spatiotemporal covariance filter (STVF)]\label{def.TVFilter} The input-output relation of a STVF of temporal memory $T$ and spatial memory $K$ is defined as
\begin{equation}
\label{eq:tv_filter}
    \vcz_{\sct} :=\mtH(\Cestt, \vch_t, \vcx_{T:t}) = \sum_{\sct'=0}^{\scT-1} \sum_{\sck=0}^\scK \sch_{\sck\sct'}\Cestt^\sck\vcx_{\sct-\sct'}
\end{equation}
where $\sch_{\sck\sct'}$ are the filter parameters. 
\end{definition}

The STVF resembles a joint shift and sum operation in both time and space, which is the building principle of the convolutional filters \cite{isufi2024gsp}. The spatiotemporal memory of the filter is controlled by the orders $(T,K)$ ultimately accounting for neighboring information up to $T$ temporal lags $\vcx_{t-T+1}, \ldots, \vcx_t$ and up to $K$ hops away in the covariance graph $\Cestt$. We illustrate the data processing pipeline of the STVF in \Cref{fig:tvnn}. 

The operation in \eqref{eq:tv_filter} can also be seen as a filter bank of $T$ covariance filters $\mtH_{\sct'}(\Cestt, \vch_t):=\sum_{\sck=0}^\scK \sch_{\sck\sct'}\Cestt^\sck$ each acting on the $t^\prime$ lag $\vcx_{t-t^\prime}$, i.e., $\vcz_{\sct} = \sum_{\sct'=0}^{\scT-1}\mtH_{\sct'}(\Cestt, \vch_t)\vcx_{\sct-\sct'}$. 
Therefore, we can compute the graph Fourier transform of the STVF separately for each filter at different lags, i.e., 
\begin{align}
\vctz_{\sct} = \mthV^\Tr\vcz_t = \sum_{\sct'=0}^{\scT-1} \sum_{\sck=0}^\scK \sch_{\sck\sct'}\mathbf{\hat{\Lambda}}^\sck\mthV^\Tr\vcx_{\sct-\sct'}.
\end{align}
Analogously to standard graph convolutional filters, this implies that, for each component $i$, the frequency response of the filter is a sum of polynomials in the covariance eigenvalues, i.e., 
\begin{equation}
    [\vctz_{\sct}]_i = \sum_{\sct'=0}^{\scT-1} \sum_{\sck=0}^\scK \sch_{\sck\sct'}\hat{\lambda}_i^\sck[\vchx_{\sct-\sct'}]_i = \sum_{\sct'=0}^{\scT-1} h_{t'}(\hat{\lambda}_i)[\vchx_{\sct-\sct'}]_i.
\end{equation} 
Based on the result from~\cite[Theorem 1]{sihag2022covariance}, there exists a combination of coefficients $h_{kt'}$ such that each filter at different lags performs PCA on the lag-zero covariance matrix.
This is different from the temporal PCA defined in~\Cref{sec:tpca}, which also considers covariances at different lags (see~\Cref{app:tpca} for details), whereas the STVF models temporal interdependencies of samples through its temporal memory.

Given the STVF, we now introduce the STVNN architecture.

\begin{definition}[SpatioTemporal coVariance Neural Networks (STVNN)]\label{def.TVNN} A STVNN is a layered architecture where each layer $l = 1, \ldots, L$ comprises a STVF [cf. Def.\ref{def.TVFilter}] nested into a pointwise nonlinearity $\sigma(\cdot)$, i.e.,
\begin{equation}
    \vcz_t^l = \sigma\left(\mtH^l(\Cestt, \vch_t, \vcz_{T:t}^{l-1})\right) =
    \sigma\left( \sum_{\sct'=0}^{\scT-1} \sum_{\sck=0}^\scK \sch_{\sck\sct'}^l\Cestt^\sck\vcz_{\sct-\sct'}^{l-1} 
    \right),~~\textnormal{for}~~l = 1, \ldots, L
\end{equation}
with input $\vcz_{\sct-\sct'}^{0} := \vcx_{\sct-\sct'}$.
\end{definition}

This enables learning joint, nonlinear, and layered representations from spatiotemporal time series by leveraging the covariance matrix $\Cestt$ as inductive bias. The output of the last layer $\vcz_t^L$ contains the final representations and constitutes the output of the STVNN, i.e., $\vcz_t^L := \fnPhi(\vcx_{T:t}, \Cestt; \vch)$. The parameters comprise those of all the filters in all layers $\vch = \{h_{kt^\prime}^l\}_{kt^\prime l}$ and are of order $\mathcal{O}((K+1)TL)$. To further improve the representational capacity of the STVNN, parallel filter banks are aggregated in each layer. That is, at layer $l$ there are $F_{\text{in}}$ input features $[\vcz_{T:t}^{l-1}]_1, \ldots, [\vcz_{T:t}^{l-1}]_{F_{\text{in}}}$ and $F_{\text{out}}$ output features $[\vcz_{T:t}^{l}]_1, \ldots, [\vcz_{T:t}^{l}]_{F_{\text{out}}}$ coupled by a filterbank of $F_{\text{in}} \times F_{\text{out}}$ filters as
\begin{equation}
\label{eq:stvnn_filterbank}
    [\vcz_t^l]_f = \sigma\left(\sum_{g = 1}^{F_{\text{in}}}\mtH^l(\Cestt, \vch_t, [\vcz_{T:t}^{l-1}]_g)\right)~f = 1, \ldots, F_{\text{out}}, l = 1, \ldots, L.
\end{equation}

\smallskip
\noindent\textbf{Online learning.} The STVNN output represents the spatiotemporal covariance-based embeddings of the time series that can be either used directly for a downstream task or processed further by a readout layer. To deploy the STVNN in a data streaming setting and make it adaptable to distribution shifts, we resort to online machine learning principles to update the filter parameters. Specifically, with a slight abuse of notation, let $\vch_t$ denote the parameter vector of the model (either filter or neural network) at time $t$. Then, as a new time sample $\vcx_t$ becomes available at time $t$, we update the parameters as
\begin{equation}\label{eq.onlinelearn}
    \vch_{\sct+1} = \vch_{\sct} - \sceta\nabla_\sct\mathcal{L}(\fnPhi(\vcx_{T:t},\Cestt; \vch_t)),
\end{equation}
where $\sceta > 0$ is the learning rate.

\smallskip
\noindent\textbf{Computational complexity.} For a STVNN layer in \eqref{eq:stvnn_filterbank}, the computational complexity is of the order $\mathcal{O}(N^2TKF_\text{in}F_\text{out})$. Although the term $N^2$ makes the model inefficient on large datasets, frequently sparse estimates of large-dimensional covariance matrixes are computed~\cite{bien2011sparse}. This reduces the complexity to $\mathcal{O}(|E|TKF_\text{in}F_\text{out})$, where $|E|$ is the number of non-zero correlations. Moreover, note that the complexity scales linearly with the temporal size $T$, whereas the complexity of temporal PCA projection scales as $\mathcal{O}(N^2T^2)$, which makes it significantly less suitable for large datasets.

\section{Stability Analysis}\label{sec_stab}

In a data streaming setting, the STVNN operates on the sample covariance matrix $\Cestt$ estimated online from finite data up to time $t$. Consequently, $\Cestt$ is a perturbed version of the underlying covariance matrix $\mtC$, which in turn induces a perturbation in the embeddings w.r.t. an STVNN trained with the true matrix $\mtC$. While for PCA-based statistical learning models this is a notorious challenge, we show in the sequel that the STVNN is robust to these finite-data effects. Likewise, the model parameters $\vch_t$ are also updated online, which calls for a suboptimality regret-like analysis w.r.t. the optimal parameters in hindsight $\vch^*$.

\subsection{Stability of spatiotemporal covariance filter}
\label{sec:stability_filter}

To study the stability of STVF, we characterize the distance between the online finite-data trained filter output $\mtH(\Cestt, \vch_t, \vcx_{T:t})$ and the output of the filter trained with the underlying covariance matrix and all the data $\mtH(\mtC, \vch^*, \vcx_{T:t})$. We then make the following standard assumptions and claim our main result.

\begin{Assumption}\label{as_lipschitz} The frequency response $h_{t'}(\lambda)$ of an STVF is Lipschitz. That is, there exists a constant $P > 0$ such that 
\begin{equation}
    |h_{t'}(\lambda_i)-h_{t'}(\lambda_j)| \leq  P|\lambda_i - \lambda_j|, \quad t'=0,\dots,T-1.
\end{equation}
\end{Assumption}

\begin{Assumption}\cite[Theorem 5.6.1]{vershynin2018high}
\label{as_norm}
    For a multivariate time series $\vcx_t$ with covariance matrix $\mtC$ the following holds:
    \begin{equation}
        \mathbb{P} \left( \|\vcx_t\|\leq G\sqrt{\mathbb{E}[\|\vcx_t\|^2]}\right) \ge 1 - \delta
    \end{equation}
where $G \ge 1$ is a constant, $\delta \approx 0$, and $\|\cdot\|$ is the $\ell_2-$norm.   
\end{Assumption}

\begin{Assumption}\cite[Theorem 4.1]{loukas2017howclose}
\label{as_eig_diff}
    Given the frequency response $h_{t'}(\lambda)$ of an STVF, the eigenvalues $\{\lambda_i\}_{i=0}^{N-1}$ and $\{\schlambda_i\}_{i=0}^{N-1}$ of the true and sample covariance matrix, respectively, satisfy
    \begin{align}
        \textnormal{sign}(\lambda_{i} - \lambda_{j})2\schlambda_{i} > \textnormal{sign}(\lambda_{i} - \lambda_{j})(\lambda_{i} + \lambda_{j})
    \end{align}
    for each pair of distinct eigenvalues $(\lambda_i,\lambda_j)$.
\end{Assumption}

As.~\ref{as_lipschitz} limits the discriminability of the filters, as their frequency response cannot change with a slope larger than $P$. As.~\ref{as_norm} refers to the variance of the data distribution, as $G$ is higher for data with higher variance. As.~\ref{as_eig_diff} relates to the approximation error of the sample covariance eigenvalues w.r.t. the true ones and holds for each eigenvalue pair $(\lambda_i,\lambda_j)$ with probability at least $1-2k_{i}^2/(N|\lambda_{i} - \lambda_{j}|)$, where $k_{i}=\left( \mathbb{E}[\|\vcx_t\vcx_t^\Tr\vcv_{i}\|^2]-\lambda_{i}^2 \right)^{1/2}$ is a term related to the kurtosis of the data distribution~\cite[Corollary 4.2]{loukas2017howclose}.

\begin{theorem}
\label{th:stability_filter}
Consider a multivariate time series $\vcx_t\in\mathbb{R}^N$ with underlying covariance matrix $\mtC$ and let the sample covariance matrix estimated from $t$ samples be $\Cestt$. Additionally let the instances satisfy w.l.o.g. $\|\vcx_t\|\leq 1$, let As.~\ref{as_lipschitz}-\ref{as_eig_diff} hold, and let the learning rate $\eta>0$ be small enough to guarantee convergence. Denote also the filter parameters optimized over the complete dataset by $\vch^*$ and those optimized online over $t$ samples using the online update in \eqref{eq.onlinelearn} by $\vch_t$. Then, the following holds with probability at least $(1-e^{-\epsilon})(1-2e^{-u})$:
\begin{equation}
\label{eq:tvnn_filter_stab}
\begin{gathered}
    \left\|\mtH(\Cestt, \vch_t, \vcx_{T:t}) - \mtH(\mtC, \vch^*, \vcx_{T:t})\right\| \leq \\
     \underbrace{\frac{1}{\sqrt{t}}PTN\left(k_\textnormal{max}e^{\epsilon/2} + QG\|\mtC\|\sqrt{\log{N}+u}\right)}_\textnormal{covariance uncertainty}
     + \underbrace{\frac{\|\vch^*\|^2}{2\eta t}}_\textnormal{parameter suboptimality} + \mathcal{O}\left(\frac{1}{t}\right)
\end{gathered}
\end{equation}
where $Q$ is an absolute constant, $k_{\textnormal{max}} = \max_jk_j$, and $k_j=\left( \mathbb{E}[\|\vcx_t\vcx_t^\Tr\vcv_j\|^2]-\lambda_j^2 \right)^{1/2}$ is related to the kurtosis of the data distribution. Here, $\epsilon, u >0$ can be arbitrarily large and $\|\cdot\|$ denotes the $\ell_2-$norm for vectors and the spectral norm for matrices. 
\end{theorem}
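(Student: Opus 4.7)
The plan is to split the deviation via triangle inequality into a \emph{parameter suboptimality} term, comparing the online iterate $\vch_t$ to the offline optimum $\vch^*$ both evaluated at $\Cestt$, and a \emph{covariance uncertainty} term, comparing $\Cestt$ to $\mtC$ at fixed parameters $\vch^*$:
\begin{align*}
\|\mtH(\Cestt, \vch_t, \vcx_{T:t}) &- \mtH(\mtC, \vch^*, \vcx_{T:t})\| \\
&\leq \|\mtH(\Cestt, \vch_t, \vcx_{T:t}) - \mtH(\Cestt, \vch^*, \vcx_{T:t})\| \\
&\quad + \|\mtH(\Cestt, \vch^*, \vcx_{T:t}) - \mtH(\mtC, \vch^*, \vcx_{T:t})\|.
\end{align*}
The two terms on the right reproduce the two displayed contributions in \eqref{eq:tvnn_filter_stab}.

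For the parameter term I would use that the STVF is linear in $\vch$ and that, for the standard convex surrogate losses used in online training, the update \eqref{eq.onlinelearn} is unconstrained online gradient descent. Standard OGD regret analysis with constant step size $\eta$ yields the average-iterate bound $\|\vch^*\|^2/(2\eta t) + (\eta/(2t))\sum_s\|\nabla_s\mathcal{L}_s\|^2$; pushing this through the linearity of the filter in $\vch$ (convergence of the parameters implies convergence of the filter output at a rate governed by the bounded operator norm $\|\sum_{k,t'}\Cestt^k\|$, which is deterministic given the data normalisation) produces the $\|\vch^*\|^2/(2\eta t) + \mathcal{O}(1/t)$ piece of the bound.

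For the covariance term I would decompose across temporal lags,
$$\mtH(\Cestt,\vch^*,\vcx_{T:t}) - \mtH(\mtC,\vch^*,\vcx_{T:t}) = \sum_{t'=0}^{T-1}\bigl[\mtH_{t'}(\Cestt,\vch^*)-\mtH_{t'}(\mtC,\vch^*)\bigr]\vcx_{t-t'},$$
and use the eigendecompositions $\mtC = \mtV\mathbf{\Lambda}\mtV^\Tr$ and $\Cestt = \mthV\mathbf{\hat{\Lambda}}\mthV^\Tr$ to split each lag into an eigenvalue piece $\mtV[h_{t'}(\mathbf{\hat{\Lambda}})-h_{t'}(\mathbf{\Lambda})]\mtV^\Tr$ and an eigenvector cross-piece involving $\mthV - \mtV$. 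The eigenvalue piece is controlled by As.~\ref{as_lipschitz} together with Weyl's inequality and the sub-Gaussian covariance concentration $\|\Cestt - \mtC\| \le QG\|\mtC\|\sqrt{(\log N + u)/t}$ which follows from As.~\ref{as_norm}, contributing the $QG\|\mtC\|\sqrt{\log N + u}$ factor with probability at least $1-2e^{-u}$. The eigenvector piece is the delicate step: it is handled through As.~\ref{as_eig_diff}, whose Davis--Kahan-type bound carries denominators $|\lambda_i - \lambda_j|$ that are exactly cancelled by the Lipschitz numerators $|h_{t'}(\lambda_i) - h_{t'}(\lambda_j)| \le P|\lambda_i - \lambda_j|$, leaving the robust $PNk_{\max}e^{\epsilon/2}/\sqrt{t}$ contribution with probability at least $1-e^{-\epsilon}$ after aggregating over the $\binom{N}{2}$ eigenvalue pairs. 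Summing across the $T$ lags and invoking $\|\vcx_{t-t'}\|\le 1$ assembles the full covariance uncertainty term.

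The main obstacle is the eigenvector-perturbation step: when two true eigenvalues are close, the naive Davis--Kahan bound blows up as $1/|\lambda_i - \lambda_j|$, which is precisely the instability that destroys PCA in the streaming regime. The saving grace is the Lipschitz design of the filter, which produces a matching $|\lambda_i-\lambda_j|$ factor in the numerator and so absorbs the singularity while paying a cost in discriminability through $P$. Carrying this cancellation through As.~\ref{as_eig_diff} while correctly tracking the kurtosis constant $k_{\max}$, the high-probability parameter $\epsilon$, and the factor $N$ from the sum over eigen-pair indices is where the technical care concentrates; the final probability $(1-e^{-\epsilon})(1-2e^{-u})$ then follows from a union bound treating the eigenvalue-concentration and eigenvector-perturbation events as independent.
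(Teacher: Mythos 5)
Your proposal follows essentially the same route as the paper's proof: the same triangle-inequality split into parameter suboptimality and covariance uncertainty, the same OGD regret bound $\|\vch^*\|^2/(2\eta t)$ for the former (exploiting linearity of the filter in $\vch$), and for the latter the same first-order eigenvalue/eigenvector perturbation decomposition in which Weyl plus the sub-Gaussian concentration of $\|\Cestt-\mtC\|$ controls the eigenvalue piece and the Loukas-type bound $|\vcv_j^\Tr\vchv_i|\lesssim k_j/(\sqrt{t}\,|\lambda_i-\lambda_j|)$ has its eigengap denominator cancelled by the Lipschitz numerator $|h_{t'}(\lambda_i)-h_{t'}(\lambda_j)|\le P|\lambda_i-\lambda_j|$. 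The only slip is your final step: a union bound would give $1-e^{-\epsilon}-2e^{-u}$, whereas the stated product $(1-e^{-\epsilon})(1-2e^{-u})$ requires treating the two events as independent, an assumption the paper itself flags as heuristic in a remark.
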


\begin{proof}
See Appendix~\ref{app:proof_stability_filter}.    
\end{proof}

The result in \eqref{eq:tvnn_filter_stab} highlights the role of the online update uncertainties in both the covariance matrix ($\|\mtH(\Cestt, \vch^*, \vcx_t) - \mtH(\mtC,\vch^*, \vcx_t)\|$) and the filter parameters ($\|\mtH(\Cestt,\vch_t, \vcx_t) - \mtH(\Cestt, \vch^*, \vcx_t)\|$). We make the following observations. 

\smallskip
\emph{Number of time samples.} The bound decreases with a rate $\mathcal{O}\left(1/\sqrt{t}\right)$ that is associated to the covariance matrix uncertainty. The bound also shows that the filter update plays a minor role on stability.\footnote{Note that in \Cref{eq:tvnn_filter_stab} we kept explicitly the parameter suboptimality term outside the notation $\mathcal{O}(1/t)$ to highlight the role of filter updates in the stability. The other terms $\mathcal{O}(1/t)$ include perturbations due to the covariance uncertainty.} 

\smallskip
\emph{Temporal window size.} A larger size $T$ of the STVF temporal memory in~\eqref{eq:tv_filter} implies a lower stability. This is because the uncertainties in the covariance matrix updates are propagated in $T$ filters to build the output. While a larger temporal memory may improve the filter discriminability, it may be damaging in non-stationary settings as the output will depend on past irrelevant history. In the stationary case, result \eqref{eq:tvnn_filter_stab} shows that it may not be useful when the covariance matrix is estimated from a few data points.

    \smallskip
\emph{Data distribution.} The term $k_j$ is related to the kurtosis of the data distribution at $\tau = 0$ along the $\vcv_i$ direction. More in detail, distributions with low kurtosis (low $k_i$) tend to have fast decaying tails, which makes the estimation of $\vcv_i$ easier and increases the stability~\cite{loukas2017howclose}. We shall corroborate this with numerical experiments in \Cref{sec:numerical_stability}.

\smallskip    
\emph{Optimal coefficients.} For a forecasting setting via the filter, we can characterize in closed form the role of the optimal parameters $\vch^*$ and their impact on the bound. Suppose we observe in batch $M$ temporal observations of the time series $\mtX\in\real^{\scN\times\scM}$ and estimate the covariance matrix $\mthC_{\scM}$. Let also the predicted targets be grouped in $\mtY\in\real^{\scN\times\scM}$. Then, fitting the data $\mtX$ into the target $\mtY$ via the filter \eqref{eq:tv_filter} implies solving $\min_\vch \|\mtA\vch-\vchy\|^2$, where vector $\vchy\in\real^{\scM\scN}$ concatenates the columns of $\mtY$, vector $\vch = \{\sch_{kt'}\}\in\real^{\scT(\scK+1)}$ collects all filter parameters, and matrix $\mtA\in\real^{\scM\scN\times\scT(\scK+1)}$ collects shifted version of the input $\mthC_{\scM}^\sck\vcx_{\sct'-\sct}~\forall t^\prime, t, k$. For a sufficiently large training set, this is an overdetermined system of equations with solution $\vch^*=(\mtA^\Tr\mtA)^{-1}\mtA^\Tr\vchy=\mtA^+\vchy$. If matrix $\mtA$ has a bad condition number, which relates to how the input is shifted over the estimated covariance, then the energy of $\vch^*$ explodes contributing to a looser bound. We may redeem this by solving a regularized least-squares problem. 

\subsection{Stability of STVNN}

We now extend the stability analysis w.r.t. the covariance uncertainty to the STVNN composed of $L$ layers and $F$ filterbanks per layer.
\begin{theorem}
Consider a true covariance matrix $\mtC$, a sample covariance matrix $\Cestt$ and a bank of STVF with frequency response terms $|h_{t'}(\lambda)|\leq 1$ and non-linearity $\sigma(\cdot)$ such that $|\sigma(a)-\sigma(b)|\leq |a-b|$. If the filters satisfy $\|\mtH(\Cestt, \vch^*, \vcx_t) - \mtH(\mtC,\vch^*, \vcx_t)\| \leq \beta_t$ for a generic $\beta_t$, then the STVNN satisfies
\begin{equation}
    \label{eq:tvnn_stability}
    \| \fnPhi(\Cestt,\vch^*, \vcx_{T:t}) - \fnPhi(\mtC,\vch^*, \vcx_{T:t}) \| \leq \scL\scF^{\scL-1}\beta_\sct.
\end{equation}
\end{theorem}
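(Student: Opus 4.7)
The plan is to prove \eqref{eq:tvnn_stability} by induction on the layer index $l\in\{1,\ldots,L\}$, tracking at each layer both the worst-case per-feature perturbation $\delta_t^l := \max_f \bigl\|[\vcz_t^l(\Cestt)]_f - [\vcz_t^l(\mtC)]_f\bigr\|$ and a uniform bound on the hidden-feature norms. Since $|h_{t'}(\lambda)|\leq 1$ makes every filter in the bank non-expansive and each layer aggregates over $F$ input features, the input bound $\|\vcx_t\|\leq 1$ propagates through a short secondary induction to $\max_f \|[\vcz_t^l(\Cestt)]_f\|\leq F^{l-1}$ for $l\geq 1$.

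For the inductive step at layer $l$, I would first invoke the $1$-Lipschitz property of $\sigma$ to drop the nonlinearity, then add and subtract the cross term $\sum_g \mtH^l(\mtC,\vch^*,[\vcz_{T:t}^{l-1}]_g(\Cestt))$ to split the difference into a filter-perturbation piece (same input, different covariance) and a filter-propagation piece (same covariance, perturbed input). The triangle inequality yields
\begin{align*}
\|[\vcz_t^l(\Cestt)]_f - [\vcz_t^l(\mtC)]_f\|
&\leq \sum_{g=1}^{F} \bigl\|\mtH^l(\Cestt,\vch^*,[\vcz_{T:t}^{l-1}]_g(\Cestt)) - \mtH^l(\mtC,\vch^*,[\vcz_{T:t}^{l-1}]_g(\Cestt))\bigr\| \\
&\quad + \sum_{g=1}^{F} \bigl\|\mtH^l(\mtC,\vch^*,[\vcz_{T:t}^{l-1}]_g(\Cestt)) - \mtH^l(\mtC,\vch^*,[\vcz_{T:t}^{l-1}]_g(\mtC))\bigr\|.
\end{align*}
Reading the hypothesis $\|\mtH^l(\Cestt,\vch^*,\cdot) - \mtH^l(\mtC,\vch^*,\cdot)\|\leq \beta_t$ as an operator-norm bound (legitimate since the STVF is linear in its input), I would bound the first sum by $F\cdot\beta_t\cdot F^{l-2}=F^{l-1}\beta_t$ using the inductive norm estimate, and the second sum by $F\,\delta_t^{l-1}$ using the non-expansiveness of the filter. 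Collecting these produces the one-step recursion $\delta_t^l\leq F^{l-1}\beta_t + F\,\delta_t^{l-1}$.

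Unrolling this recursion from the base case $\delta_t^1\leq\beta_t$ (layer~1 has a single unperturbed input feature with norm at most one) produces the closed form $\delta_t^L\leq \sum_{l=1}^{L} F^{L-l}\cdot F^{l-1}\beta_t = L F^{L-1}\beta_t$, from which \eqref{eq:tvnn_stability} follows after aggregating over the final-layer features.

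The hard part will be the joint bookkeeping of the geometric-in-$l$ hidden-feature norm bound $F^{l-1}$ and the additive-in-$l$ perturbation bound $\delta_t^l$: if one replaces the norm estimate by a cruder $F^l$, or the recursion by $\delta_t^l\leq F^l\beta_t+F\,\delta_t^{l-1}$, the coefficient inflates by a factor of $F$ and the announced constant is destroyed. A secondary subtlety is the treatment of the temporal window $T$, which enters every layer as a sum over $T$ lags; the clean $LF^{L-1}\beta_t$ form only holds because the filter-level hypothesis already encompasses this temporal aggregation, consistent with Theorem~\ref{th:stability_filter} where $\beta_t$ itself scales linearly in $T$.
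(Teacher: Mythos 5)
Your proposal is correct and takes essentially the same route as the paper: the paper offers no independent argument, simply invoking \cite[Theorem 4]{Gama2020stability}, and your layer-wise induction --- splitting each layer's perturbation into a filter-perturbation term and a propagation term, tracking the $F^{l-1}$ hidden-feature norm growth, and unrolling $\delta_t^l\leq F^{l-1}\beta_t+F\,\delta_t^{l-1}$ --- is precisely the standard proof of that cited result. Your closing remark that the temporal window $T$ must be absorbed into the filter-level hypothesis $\beta_t$ (rather than re-amplified per layer) is exactly the point the paper relies on implicitly when contrasting its $LF^{L-1}\beta_t$ bound with the $L(FT)^{L-1}\beta_t$ bound of the VNN.
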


The proof follows directly from \cite[Theorem 4]{Gama2020stability} for a generic $\beta_t$. In our case, $\beta_t$ is the bound corresponding to the first term on the r.h.s. of \eqref{eq:tvnn_filter_stab}, which decreases with the number of samples with rate $\mathcal{O}(1/\sqrt{t})$. The stability of STVNN decreases with the number of layers and embedding size $F$ as for the generic GNN~\cite{Gama2020stability}. We remark that quantifying the suboptimality of the parameters for the STVNN is challenging due to the non-convexity of the problem, but even so this may also not be insightful as during training we will end in local minima.

\smallskip
\noindent\textbf{Comparison with alternative bounds.} As an alternative to the STVNN, we may ignore the structural temporal dependencies and consider the VNN~\cite{sihag2022covariance} developed for tabular. This implies that we treat the $T-1$ previous values of the time series as node features. From~\cite[Theorem 3]{sihag2022covariance} this model has a stability bound $\Delta_\text{VNN} = L(FT)^{L-1}\beta_t$, where $L$ is the number of layers and $\beta_t$ is the filter stability bound that does not depend on $T$. This bound grows exponentially with $T$, whereas the bound of the STVNN grows only linearly with $T$, indicating its superior stability.

\smallskip
\noindent\textbf{Comparison with PCA bound.} To provide further insight into the STVNN, we derive the following stability analysis for the online PCA w.r.t. the uncertainties in the covariance matrix. 
\begin{proposition}
\label{pr:pca_stability}
Consider the eigendecompositions of the true covariance matrix $\mtC=\mtV\mtLambda\mtV^\Tr$ and of the sample covariance $\Cestt=\mthV\mathbf{\hat\Lambda}\mthV^\Tr$ estimated from $t$ time samples. Under As.~\ref{as_eig_diff} and given a new time sample $\vcx$, the norm difference in the projection on the true and sample covariance eigenspace is bounded as follows with probability at least $1-e^{-\epsilon}$:  
\begin{align}
\left\|\mtV^\Tr\vcx - \mthV^\Tr\vcx\right\| \leq  \frac{2N}{\sqrt{t}}\sqrt{N-1}e^{\epsilon/2}\max_{i, j\neq i}\frac{k_j}{|\lambda_i-\lambda_j|} + \mathcal{O}\left(\frac{1}{t}\right), \label{eq:pca_stab_bound}
\end{align}
where $\lambda_i$ is the $i$-th largest eigenvalue of $\mtC$ and  $k_j=\left( \mathbb{E}[||\vcx_t\vcx_t^\Tr\vcv_j||^2]-\lambda_j^2 \right)^{1/2}$ is related to the kurtosis of the data distribution.
\end{proposition}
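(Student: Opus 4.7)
The plan is to reduce the projection error to a perturbation bound on the eigenvectors of $\Cestt$ and then invoke the concentration result from \cite{loukas2017howclose} that already underpins As.~\ref{as_eig_diff}. The starting point is the identity $\mtV^\Tr\vcx - \mthV^\Tr\vcx = (\mtV-\mthV)^\Tr\vcx$, so by submultiplicativity (and the normalization $\|\vcx\|\leq 1$ that is in force throughout the stability section) it suffices to bound $\|\mtV-\mthV\|$. I would then upper-bound the spectral norm by the Frobenius norm, writing
\begin{equation}
\|\mtV-\mthV\|^2 \leq \|\mtV-\mthV\|_F^2 = \sum_{i=0}^{N-1}\|\vcv_i-\vchv_i\|^2,
\end{equation}
so the problem reduces to controlling each column of $\mtV-\mthV$ separately.

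Next, I would invoke the eigenvector perturbation bound from \cite[Thm.~4.1, Cor.~4.2]{loukas2017howclose}, which under As.~\ref{as_eig_diff} gives, for each index $i$ and with probability at least $1-e^{-\epsilon}$,
\begin{equation}
\|\vcv_i-\vchv_i\| \;\leq\; \frac{2\,e^{\epsilon/2}}{\sqrt{t}}\sum_{j\neq i}\frac{k_j}{|\lambda_i-\lambda_j|}\;+\;\mathcal{O}\!\left(\tfrac{1}{t}\right),
\end{equation}
where the $\mathcal{O}(1/t)$ absorbs the higher-order remainder produced by linearizing the eigenvector perturbation series. A union bound over the $N$ eigenvector events keeps the same probability statement up to the constants already displayed in the target inequality.

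The third step is purely combinatorial aggregation. Bounding the inner sum by its size times the worst-case term yields $\sum_{j\neq i} k_j/|\lambda_i-\lambda_j| \leq (N-1)\max_{j\neq i} k_j/|\lambda_i-\lambda_j|$, and then Cauchy--Schwarz (or equivalently $\ell_1$-to-$\ell_2$ domination) across the $N$ columns in the Frobenius sum produces the advertised prefactor $2N\sqrt{N-1}$ together with the global $\max_{i,\,j\neq i}k_j/|\lambda_i-\lambda_j|$. Collecting terms and carrying the $\mathcal{O}(1/t)$ through yields exactly \eqref{eq:pca_stab_bound}.

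The main obstacle is matching the bookkeeping on the probabilistic event: the Loukas bound is naturally stated per eigenvector pair, while the target inequality carries a single $(1-e^{-\epsilon})$-probability statement and a single maximum over $(i,j)$. I would need to argue carefully that, under the sign condition imposed by As.~\ref{as_eig_diff} for every pair, the union-bound overhead is either absorbed into the constant $2$ in the prefactor or re-parametrized by rescaling $\epsilon$, so that the high-probability statement survives in the clean form written in the proposition. The rest of the argument is mechanical application of standard norm inequalities.
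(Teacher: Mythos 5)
Your overall strategy is the same as the paper's: reduce the projection error to per-column eigenvector differences $\|\vcv_i-\vchv_i\|$, control these via the concentration result of \cite{loukas2017howclose} for sample-covariance eigenvectors, and aggregate over columns (your Frobenius-norm aggregation yields a constant $2\sqrt{N}(N-1)$, slightly smaller than the paper's $2N\sqrt{N-1}$ obtained from $\sum_i\le N\max_i$, so that part is fine). The genuine gap is in your second step: the bound $\|\vcv_i-\vchv_i\|\le \frac{2e^{\epsilon/2}}{\sqrt{t}}\sum_{j\neq i}k_j/|\lambda_i-\lambda_j|+\mathcal{O}(1/t)$ is not what \cite[Theorem~4.1]{loukas2017howclose} provides; that theorem only gives a tail bound on the individual overlaps $|\vchv_i^\Tr\vcv_j|$ for $j\neq i$. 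Converting those overlap bounds into a bound on $\|\vcv_i-\vchv_i\|$ is precisely the content of the paper's proof and cannot be outsourced to the citation. Two ingredients are required. First, a sign convention: eigenvectors are only defined up to sign, so unless you choose $\vchv_i$ with $\vchv_i^\Tr\vcv_i\ge 0$ the inequality you assert is simply false (take $\vchv_i=-\vcv_i$: every cross-overlap vanishes yet $\|\vcv_i-\vchv_i\|=2$). Second, one must split $\delta\vcv_i=\vcv_i-\vchv_i$ into components parallel and perpendicular to $\vcv_i$: the perpendicular part has norm $\bigl(\sum_{j\ne i}(\vchv_i^\Tr\vcv_j)^2\bigr)^{1/2}\le\sqrt{N-1}\max_{j\ne i}|\vchv_i^\Tr\vcv_j|$ and is first order in the overlaps, while the parallel part equals $1-|\vchv_i^\Tr\vcv_i|=\frac{1}{2}\sum_{j\ne i}(\vchv_i^\Tr\vcv_j)^2+\dots$ by the unit normalization of $\vchv_i$ and a Taylor expansion of the square root, hence is second order and is absorbed into $\mathcal{O}(1/t)$. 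Your remark about ``linearizing the eigenvector perturbation series'' gestures at this but does not supply it, and it is where the $\sqrt{N-1}$ and the $\mathcal{O}(1/t)$ in the target bound actually come from.

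On the probabilistic bookkeeping that you flag as the main obstacle: your worry is legitimate, but you should know that the paper does not resolve it either --- it applies the per-pair overlap bound at level $1-e^{-\epsilon}$ and then takes $\max_{i,\,j\ne i}$ without paying a union-bound factor of order $N^2$ in the failure probability. A fully rigorous statement would require rescaling $\epsilon\mapsto\epsilon+2\log N$ (which inflates the bound through $e^{\epsilon/2}$) or weakening the probability to $1-N(N-1)e^{-\epsilon}$. This is a defect shared by your proposal and the reference proof, so it is not something you need to repair in order to match the paper; the missing derivation of the per-eigenvector bound is.
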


\begin{proof}
    See Appendix~\ref{app:pca_stability}.
\end{proof}

This bound shows the lower stability of PCA compared to the STVNN and STVF. Indeed, the bound in \eqref{eq:pca_stab_bound} inversely depends on the smallest difference between two eigenvalues of the covariance matrix, which leads to instabilities when these eigenvalues are close~\cite{jolliffe1989illdefined}. For the STVF, instead, this effect is damped by the Lipschitz constant of the filter, which guarantees the output stability at the expense of discriminability. Then, the STVNN improves the discriminability of the filter by allowing to learn nonlinear mappings. In the next section, we empirically corroborate our stability analysis and show that the STVNN outperforms alternatives for forecasting tasks.

\section{Numerical Results}
\label{sec:experiments}

This section rigorously corroborates the performance of the proposed approach both quantitatively and qualitatively with synthetic and three real datasets. Our main focus is to position the STVNN w.r.t. the temporal PCA and alternatives.

\smallskip
\noindent\textbf{Datasets.} We consider two synthetic datasets and three real-world datasets to investigate the stability of STVNN and its adaptability to distribution shifts. We split datasets into train/validation/test sets of size 20\%/10\%/70\% to simulate a streaming data setting. A thorough overview of the datasets is provided in \Cref{app:datasets}.

\emph{Synthetic datasets.} We generate both stationary and non-stationary time series. In the stationary setting, we fix a covariance matrix $\mtC$ and sample observations $\vcz_t \sim \mathcal{N}(\textbf{0}, \mtC)$. Then, we create time series by enforcing temporal causality as $\vcx_t = \sum_{t'=0}^{\tau} h_{t'}\vcz_{t-t'}$, where $h_{t'} = h'_{t'}/\sqrt{\sum_{t}h'_t}$ and $h'_{t} = e^{-t}$ for $t=0,\dots,\tau$. In our experiments, we set $\tau=9$ and we generate datasets with covariances $\mtC$ with varying eigenvalue distribution tail sizes (related to kurtosis and eigenvalue closeness).
In the non-stationary setting, we consider a first-order autoregressive process $\vcx_t = \alpha_t\vcx_{t-1}+\vceps$, where $\vceps_i \sim \mathcal{N}(0,1)$. We fix $\alpha_t=0.5$ in the training set of 4000 samples and, in the test set, we modify $\alpha_t$ every 1000 samples setting it to, in order, 0.1, 0.4, 0.6, 0.1, 0.3, 0.6.
We repeat all experiments on synthetic datasets on 10 different time series generated with the same parameter configuration and report the average results.

\emph{Real-world datasets.} We consider three real datasets of different temporal sizes, resolutions and dynamics: i) the \textbf{NOAA} dataset comprising 8579 hourly temperature measurements across 109 stations in the US~\cite{arguez2012noa}; ii) the \textbf{Molene} dataset comprising 744 temperature recordings with an hourly resolution over 32 stations located in a region of France~\cite{girault2015stationary}; and iii) the \textbf{Exchange rate} dataset containing the daily exchange rates of 8 countries' currencies for 7588 open days~\cite{wu2020connecting}.
For NOAA and Molene, we apply the same preprocessing steps in~\cite{isufi2019varma}, whereas for Exchange rate we consider the dataset provided in the public repository in~\cite{MTGNNrepo}.

\smallskip
\noindent\textbf{Experimental setup.}
On synthetic datasets, we use STVNN with 2 layers of size \{32,16\} and $K=2$; on real datasets, instead, we optimize the hyperparameters through a grid search. To perform forecasting, we apply a 2-layer MLP on the embeddings generated by STVNN and use the Mean Squared Error loss.
To prevent numerical issues, since we observe the principal eigenvalue of the estimated covariance matrix to be large for some datasets, we use the trace-normalized estimated covariance matrix (i.e., $\mttC_t=\Cestt/\trace(\Cestt)$).
STVNNs are trained for 40 epochs. We run forecasting experiments on real datasets 5 times and report average performance and standard deviation. 
We provide additional details about our experimental setup and the hyperparameter grids in~\Cref{app:exp_setup}.

\begin{figure}[t]
  \centering
  \begin{subfigure}{.45\textwidth}
\includegraphics[width=\linewidth]{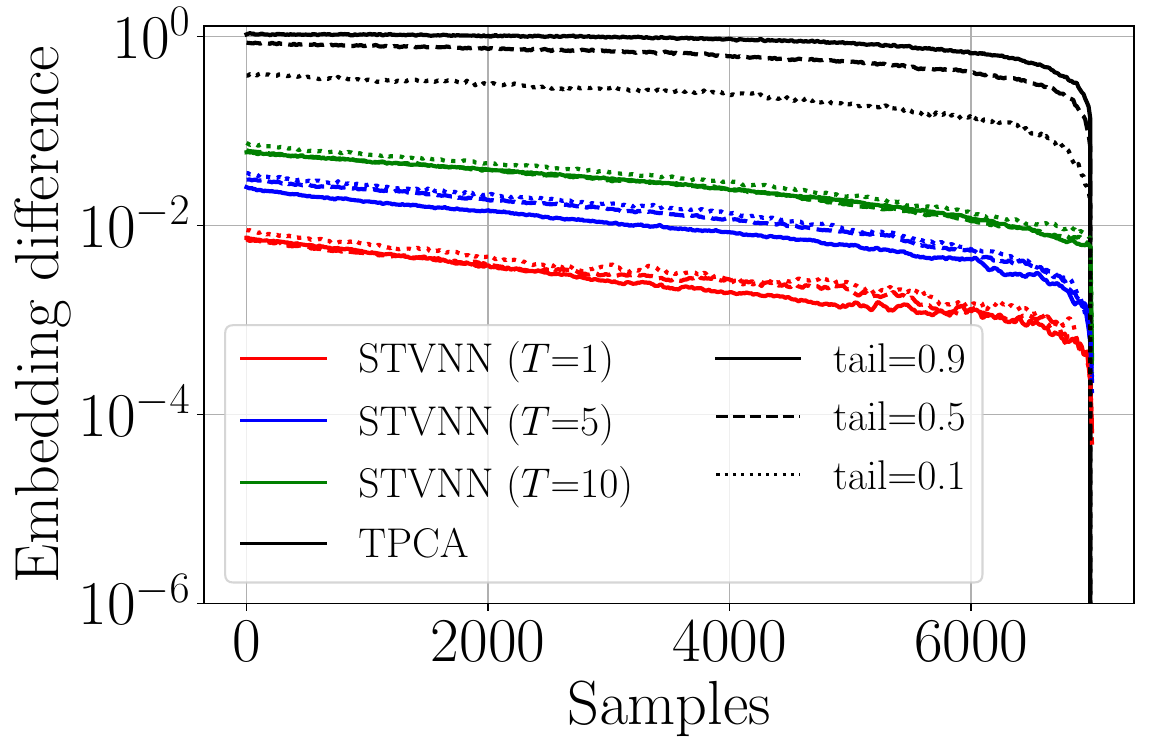}
\caption{Synthetic datasets}
\label{fig:stability_syn}
  \end{subfigure}
  \hfill
    \begin{subfigure}{.45\textwidth}        
    \includegraphics[width=\linewidth]{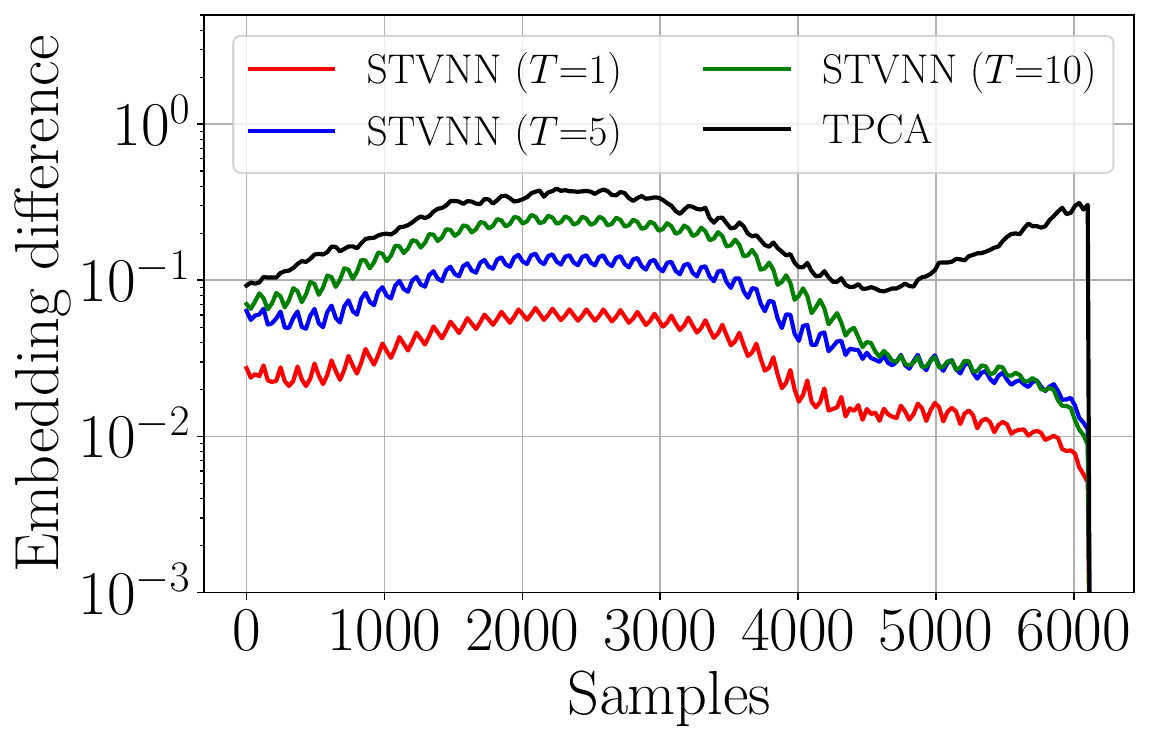}
  \caption{NOAA}
\label{fig:stability_noa}
  \end{subfigure}
  \hfill
    \begin{subfigure}{.45\textwidth}
\includegraphics[width=\linewidth]{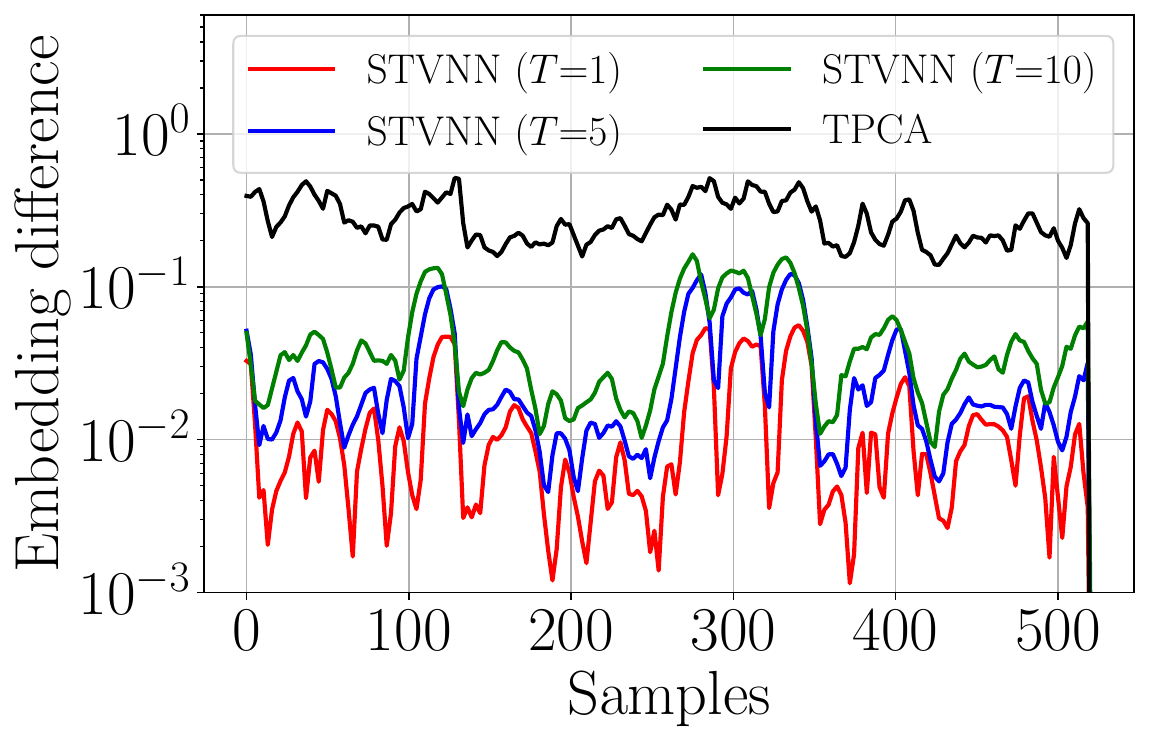}
\caption{Molene}
\label{fig:stability_molene}
  \end{subfigure}
  \hfill
    \begin{subfigure}{.45\textwidth}        
    \includegraphics[width=\linewidth]{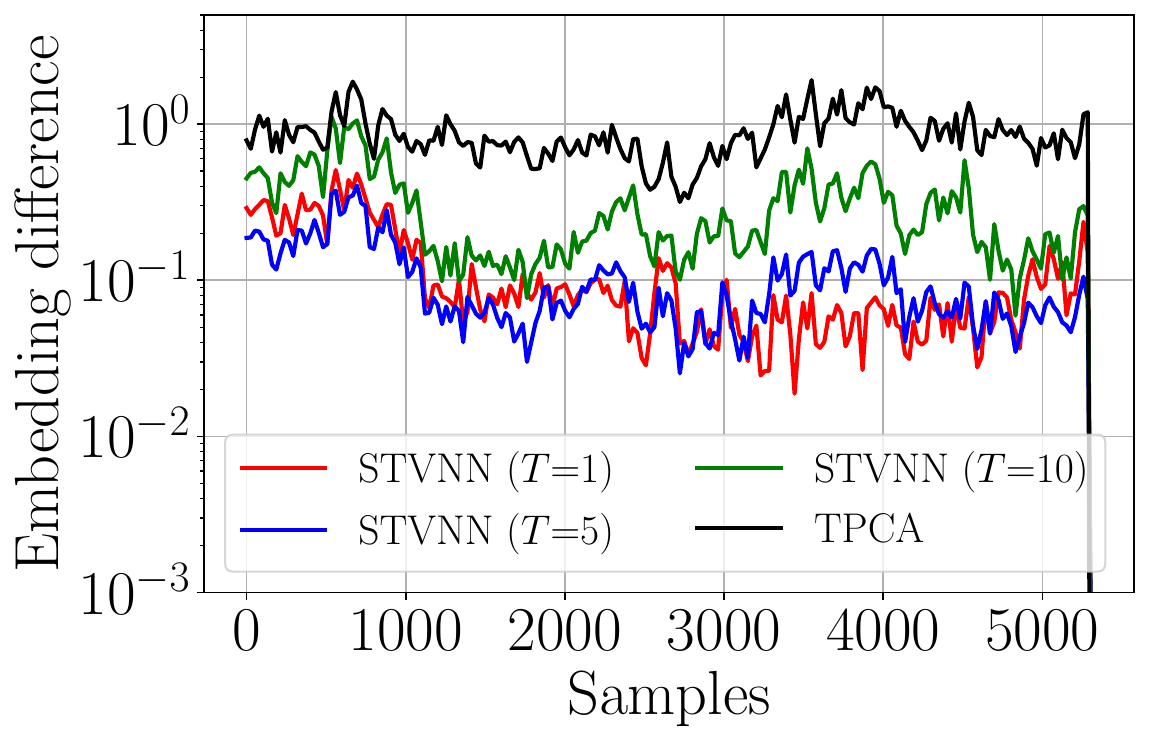}
  \caption{Exchange rate}
\label{fig:stability_exchange_rate}
  \end{subfigure}
  \caption{Embedding difference of the STVNN (i.e., $\| \fnPhi(\Cestt,\vch^*, \vcx_{T:t}) - \fnPhi(\mtC,\vch^*, \vcx_{T:t}) \|$) and TPCA with estimated and optimal covariance and parameters on different datasets for distinct observation windows $T$ and covariance eigenvalues distribution tails on synthetic datasets. Larger tails imply closer eigenvalues and higher kurtosis, leading to less distinguishable principal components.}
\label{fig:stability}

\end{figure}

\subsection{Model analysis}
\label{sec:numerical_stability}
We first analyze the behavior of STVNN in stationary and non-stationary settings by discussing its stability and capability to adapt to distribution shifts and we assess the impact of the covariance and parameters updates.

\begin{figure}[t]
\centering
\begin{minipage}{.49\textwidth}
  \centering
  \includegraphics[width=\linewidth]{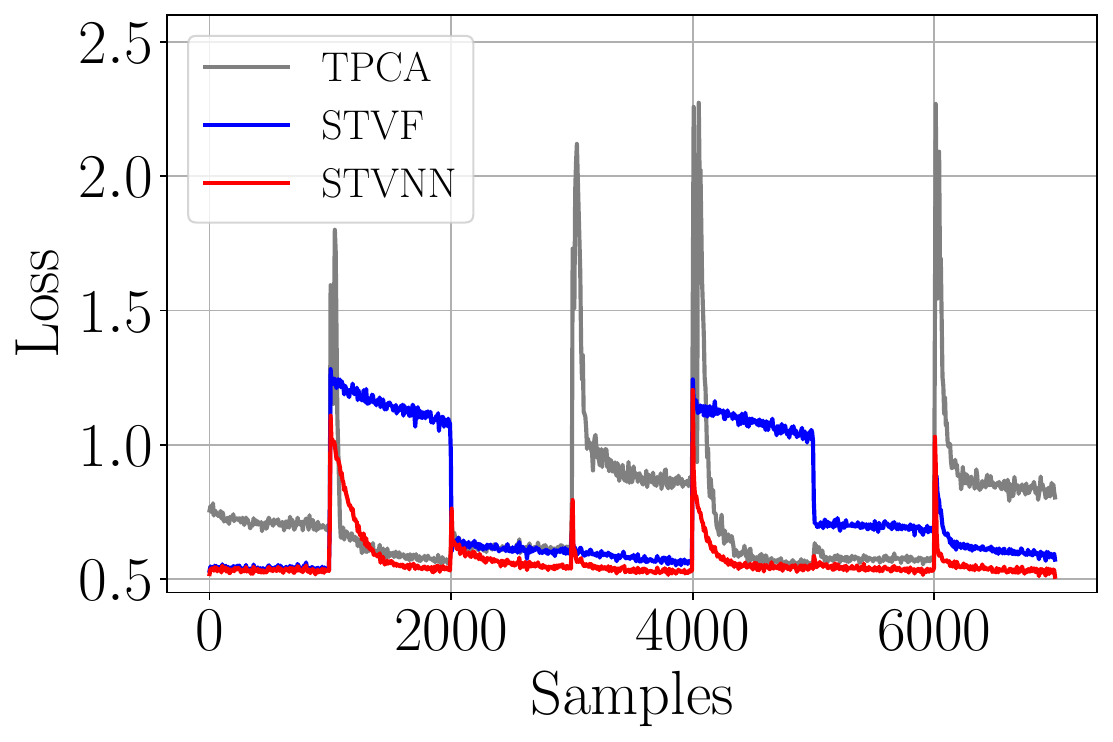}
  \caption{Loss evolution on the synthetic non-stationary dataset.}
  \label{fig:distr_shifts}
\end{minipage}%
\hfill
\begin{minipage}{.49\textwidth}
  \centering
  \includegraphics[width=\linewidth]{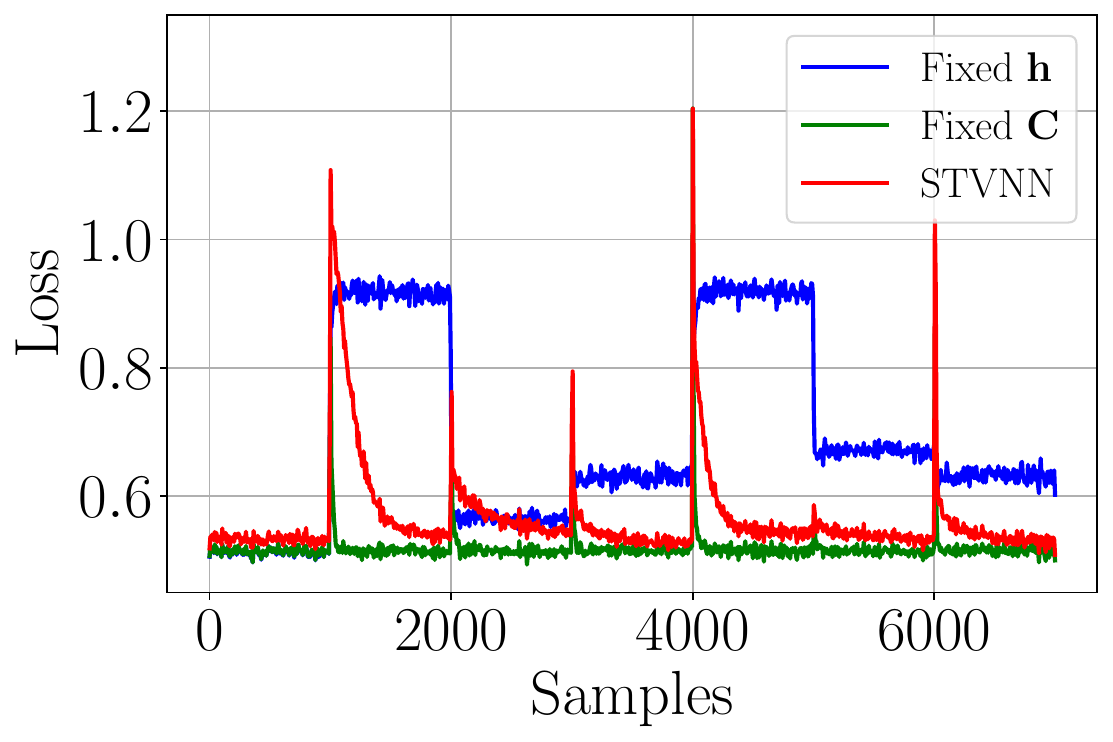}
  \captionof{figure}{Ablation study on non-stationary datasets.}
  \label{fig:ablations}
\end{minipage}
\end{figure}

\smallskip
\noindent\textbf{Stability.} In Fig.~\ref{fig:stability}, we corroborate the stability of the STVNN when trained online from finite samples and contrast it with the temporal PCA (TPCA). We consistently see that the STVNN outperforms TPCA for any temporal window size $T$, confirming the superior stability of graph convolutions over PCA-based approaches. This is accentuated in the synthetic case with a high distribution tail (Fig.~\ref{fig:stability_syn})  where the eigenvalues are closer to each other. STVNN is little affected by the distribution tail, whereas TPCA becomes significantly less stable when the eigenvalues are closer. This aligns with our theoretical observations in \eqref{eq:tvnn_stability} and \eqref{eq:pca_stab_bound}. Finally, we observe that as $T$ gets larger, the STVNN pays in stability, ultimately, corroborating result \eqref{eq:tvnn_filter_stab}.

\smallskip
\noindent\textbf{Distribution shifts.} To evaluate the capability of STVNN to adapt to distribution shifts, we evaluate the loss on the test set of the synthetic non-stationary datasets. Fig.~\ref{fig:distr_shifts} compares the STVNN, the spatiotemporal covariance filter, and the TPCA. The STVNN adapts quickly to the distribution shifts, whereas a single filter converges more slowly. Since the filter is a linear function, there exists only one optimal point, which may take long to reach, whereas the non-convexity of STVNN leads to multiple local optima that can be reached more quickly. We also see that the online TPCA leads to a substantially worse performance, especially in the proximity of the point changes.

\smallskip
\noindent\textbf{Online updates.} Finally, we investigate the role of the different online updates in the STVNN. Specifically, we train the STVNN by fixing alternatively either the parameters $\vch$ or the covariance matrix during the online updates on the test set. In Fig.~\ref{fig:ablations}, we report the results for the synthetic non-stationary datasets. The model with fixed parameters cannot adapt to distribution shifts, thus its performance drops significantly when the drifts happen. Differently, when the model is allowed to adapt the parameters with the fixed true covariance matrix (i.e., computed using the complete dataset), it adapts more quickly. By contrasting this (fixed $\mtC$) with the STVNN performance we can appreciate the impact of the covariance matrix estimation error and its role in slowing down the adaptation process. However, we remark that once the drift is passed the STVNN quickly approaches the optimal case.

\begin{table}[t]
\caption{Symmetric Mean Absolute Percentage Error (sMAPE) for forecasting on real datasets. Best results are in \textbf{bold}, second best are in \textit{\underline{italic}}.}
\label{tab:forecasting}
\centering
\setlength{\tabcolsep}{4pt}
\begin{tabular}{c|c|ccccc}
\toprule
\textbf{Datasets} & \textbf{Steps} & \textbf{LSTM} & \textbf{TPCA} & \textbf{VNN} & \textbf{VNNL} & \textbf{STVNN} \\
\midrule
\multirow{3}{*}{NOAA} & 1 & 1.98$\pm$0.07 & 2.42$\pm$0.05 & 1.71$\pm$0.04 & \textit{\underline{1.67}}$\pm$0.10 & \textbf{1.35}$\pm$0.06 \\
 & 3 & \textbf{3.10}$\pm$0.10 & 3.93$\pm$0.06 & \textit{\underline{3.14}}$\pm$0.10 & 3.36$\pm$0.27 & 3.21$\pm$0.07 \\
 & 5 & \textbf{3.46}$\pm$0.18 & 5.10$\pm$0.09 & 4.03$\pm$0.34 & 5.76$\pm$2.37 & \textit{\underline{3.71}}$\pm$0.25 \\
 \midrule
 \multirow{3}{*}{Molene} & 1 & 0.29$\pm$0.00 & 0.35$\pm$0.00 & \textbf{0.20}$\pm$0.01 & 0.21$\pm$0.00 & \textbf{0.20}$\pm$0.01 \\
 & 3 & 0.47$\pm$0.01 & 0.46$\pm$0.01 & 0.43$\pm$0.01 & \textit{\underline{0.41}}$\pm$0.01 & \textbf{0.38}$\pm$0.03 \\
 & 5 & 0.60$\pm$0.01 & \textbf{0.56}$\pm$0.00 & 0.64$\pm$0.04 & 0.59$\pm$0.00 & \textbf{0.56}$\pm$0.02 \\
\midrule
\multirow{3}{*}{\shortstack{Exchange\\rate}} & 1 & 1.25$\pm$0.02 & 1.73$\pm$0.18 & 0.70$\pm$0.01 & \textit{\underline{0.68}}$\pm$0.01 & \textbf{0.65}$\pm$0.01 \\
 & 3 & 1.33$\pm$0.03 & 1.66$\pm$0.02 & \textit{\underline{0.98}}$\pm$0.02 & 1.00$\pm$0.01 & \textbf{0.94}$\pm$0.01 \\
 & 5 & 1.39$\pm$0.02 & 1.75$\pm$0.04 & 1.19$\pm$0.01 & \textit{\underline{1.16}}$\pm$0.02 & \textbf{1.11}$\pm$0.01 \\
\bottomrule
\end{tabular}
\end{table}

\subsection{Forecasting}

Furthermore, we investigate the forecasting potential of the STVNN on three real datasets, so as to reach the following two objectives:
\begin{itemize}
    \item[]\textbf{(O1)} show the importance of the covariance networks as inductive biases for spatiotemporal relational learning;
    \item[] \textbf{(O2)} highlight the role of joint spatiotemporal learning w.r.t. disjoint models or models that ignore the temporal dependencies.
\end{itemize}
To reach \textbf{(O1)}, we contrast the STVNN with a vector LSTM model that is trained online on the multivariate time series. Instead, to reach \textbf{(O2)}, we contrast it with the disjoint models: i) TPCA, which first processes the data with the temporal PCA to extract features and then uses an MLP for the forecasting task; ii) VNNL (VNN-LSTM), a static VNN followed by an LSTM shared across the different time series. For objective \textbf{(O2)}, we also contrast with the vanilla VNN from~\cite{sihag2022covariance} where the previous $T$ time snapshots are treated as node features.

\smallskip
\Cref{tab:forecasting} reports the performance of all methods on multi-step forecasting. The STVNN achieves the most consistent performance among the baselines across datasets and steps, demonstrating its competitiveness with popular models such as LSTM. This justifies the choice of using the empirical covariance matrix as an inductive bias to model data interdependencies. We also see a consistent improvement over the VNN with time features and VNN-LSTM, justifying the importance of the temporal convolution in the STVF compared to alternative ways of incorporating temporal information. Lastly, we see a substantial improvement w.r.t. TPCA-based regression showing that our two-dimensional causal filter in \eqref{eq:tv_filter} compensates well the lack of the cross-covariance matrices and at the same time keeps the computational costs contained. We show that the same behavior occurs also for different forecasting performance metrics in \Cref{app:mts_forecasting_full}.

\section{Conclusion}

This work introduced the spatiotemporal covariance neural network (STVNN), a graph neural network that leverages spatiotemporal convolutions using the covariance matrix as an inductive bias to represent their interactions. STVNN is suitable for streaming and non-stationary data as it can adapt both the covariance network and parameters online.
We proved that STVNN is stable to uncertainties due to these online estimations and its temporal component moderately affects this stability compared to alternative designs such as temporal principal component analysis.
We corroborated our theoretical findings with numerical experiments showing that STVNN is stable in streaming settings, adapts to distribution shifts, and is effective in forecasting tasks. 
Compared to other covariance-based temporal data processing techniques such as temporal PCA, STVNN does not account for covariance terms at time lags different from zero, modeling instead interdependencies across time with a temporal sum. This facilitates scalability while capturing the spatiotemporal relations. Yet, developing relational learning models that include also these terms may provide other insights into STVNN. Future work will focus on this aspect as well as theoretically characterizing the impact of distribution shifts.

\begin{credits}
\subsubsection{\ackname} 
The study was supported by the TU Delft AI Labs program and the NWO OTP GraSPA proposal \#19497.

\subsubsection{\discintname}
The authors have no competing interests to declare that are relevant to the content of this article. 
\end{credits}

%
%
\bibliographystyle{splncs04}
\bibliography{bibliography}

\newpage
\appendix

\section{Temporal PCA}
\label{app:tpca}
We consider the common temporal PCA (TPCA) approach in~\cite[Chapter 12]{Jolliffe2002pca} in a stationary setting. 
At time $t$, we consider the extended data sample $\vctx_t = [\vcx_t^\Tr,\vcx_{t-1}^\Tr,\dots,\vcx_{t-T+1}^\Tr]^\Tr$ given by the concatenation of the current and the previous $T-1$ time samples. Then, we compute the extended sample covariance matrix 
\begin{equation}
\mttC = \mathbb{E} [(\vctx_t-\boldsymbol{\tilde{\mu}})(\vctx_t-\boldsymbol{\tilde{\mu}})^\Tr],    
\end{equation}
where $\boldsymbol{\tilde{\mu}}$ is the sample mean. Temporal PCA consists in the projection of a data sample $\vctx_t$ on the eigenspace of $\mttC$.  
More in detail, the matrix $\mttC$ captures correlations through space and time given its block structure: 
\begin{equation}
\mttC = 
\begin{bmatrix}
\mtC \quad & \mtC_{1} \quad & \dots \quad & \mtC_{T-1} \\
\mtC_{1} \quad & \mtC \quad & \dots \quad & \mtC_{T-2} \\
\vdots \quad & \vdots \quad & \ddots \quad & \vdots \quad & \\
\mtC_{T-1} \quad & \mtC_{T-2} \quad & \dots \quad & \mtC \\
\end{bmatrix}	    
\end{equation}
where the matrixes on the diagonal are lag-zero covariance matrixes and the off-diagonal terms are the covariances at different time lags as defined in \Cref{remark1}.
Throughout the paper, when we perform a downstream task using TPCA-based baselines, we first project the extended data samples $\vctx_t$ on the covariance matrix eigenspace and then apply a readout layer (generally an MLP) for the final task. This, however, has a prohibitive computational cost of order $\mathcal{O}(N^3T^3)$ for the eigendecomposition and of quadratic order for every projection of $\vctx_t$.

\section{Proof of \Cref{th:stability_filter}}
\label{app:proof_stability_filter}

To bound the filter output difference $\|\mtH(\Cestt,\vch_t,\vcx_{T:t}) - \mtH(\mtC,\vch^*,\vcx_{T:t})\|$, we add and subtract within the norm $\mtH(\Cestt,\vch^*,\vcx_{T:t})$ to get
\begin{align}
\begin{split}
        &\left\|\mtH(\Cestt,\vch_t,\vcx_{T:t}) - \mtH(\mtC,\vch^*,\vcx_{T:t})\right\| = \\
        &\left\|\mtH(\Cestt,\vch_t,\vcx_{T:t}) - \mtH(\mtC,\vch^*,\vcx_{T:t}) + \mtH(\Cestt, \vch^*,\vcx_{T:t}) - \mtH(\Cestt, \vch^*,\vcx_{T:t})\right\| \leq \\
        &\left\|\mtH(\Cestt,\vch_t,\vcx_{T:t}) - \mtH(\Cestt, \vch^*,\vcx_{T:t})\right\| + \left\|\mtH(\Cestt, \vch^*,\vcx_{T:t}) - \mtH(\mtC,\vch^*,\vcx_{T:t})\right\| \leq \\
        & \alpha_t + \beta_t,
\end{split}
\end{align}
where the penultimate step follows from the triangle inequality. Here, we see the contributions of the two terms in the bound: $\|\mtH(\Cestt,\vch_t,\vcx_{T:t}) - \mtH(\Cestt, \vch^*,\vcx_{T:t})\| \leq \alpha_t$ (parameters suboptimality error) and $\|\mtH(\Cestt, \vch^*,\vcx_{T:t}) - \mtH(\mtC,\vch^*,\vcx_{T:t})\| \leq \beta_t$ (covariance uncertainty error). In the sequel, we upper-bound each of them. We shall specify that all the norms used in the proof are $\ell_2-$norms if the argument is a vector and operator norms if it is a matrix.

\subsection{Parameters suboptimality error}

Given $\vch^*$ as the optimal set of coefficients for the multivariate time series forecasting problem, optimizing $\vch_t$ is equivalent to solving the problem 
\begin{equation}
    \min_{\vch_t}\left\|\mtH(\Cestt,\vch_t, \vcx_{T:t}) - \mtH(\Cestt, \vch^*, \vcx_{T:t})\right\|^2.
\end{equation}
This is an unconstrained convex optimization problem since the cost function is the composition of a convex norm and an affine transformation of the coefficients $\vch$ (i.e., the filter). Therefore, since we update $\vch_t$ via online gradient descent, under the assumption of a learning rate $\eta>0$ small enough to guarantee convergence, we can exploit the upper-bound of the distance between the online update and the optimal solution in~\cite[Theorem 3.4]{garrigos2023handbook} to obtain $\alpha_t \le \|\vch^*\|^2/({2 \eta t})$. 

\subsection{Covariance uncertainty error}

To begin with, we consider that the estimated covariance matrix $\Cestt$ is a perturbed version of the true covariance matrix $\mtC$ such that $\Cestt = \mtC + \mtE$, where $\mtE$ is an error matrix. Then, we write the Taylor expansion for $\Cestt^\sck$ as
\begin{equation}
    \Cestt^\sck = (\mtC + \mtE) ^\sck = \mtC^\sck + \sum_{r=0}^{k-1}\mtC^r\mtE\mtC^{k-r-1} + \mttE
\end{equation}
where $\mttE$ is such that $\|\mttE\| = \mathcal{O}(\|\mtE\|^2)$. Since we consider the error to be small $\|\mtE\| \ll 1$, which is the case after a few training steps of the STVNN (i.e., $t \gg 0$), we ignore $\mttE$ in the following.

Then, we define the eigendecomposition of the true covariance matrix $\mtC = \mtV\mtLambda\mtV^\Tr$ and of the estimated one $\Cestt = \mthV\mathbf{\hat{\Lambda}}\mthV^\Tr$. Substituting the eigendecomposition of $\mtC$ and applying the respective graph Fourier transform on signal $\vcx_{t-t'}$, we get

\begin{align}
\begin{split}
\mtH(\Cestt, \vch_t, \vcx_{T:t}) - \mtH(\mtC, \vch^*, \vcx_{T:t}) = \sum_{t'=0}^{T-1}\sum_{k=0}^K h_{kt'} \sum_{r=0}^{k-1}\mtC^r\mtE\mtC^{k-r-1}\vcx_{t-t'} \\
    = \sum_{i=0}^{N-1}\sum_{t'=0}^{T-1}\sctx_{t-t',i} \sum_{k=0}^K h_{kt'} \sum_{r=0}^{k-1}\mtC^r\mtE\mtC^{k-r-1}\vcv_i  \\
    = \sum_{i=0}^{N-1}\sum_{t'=0}^{T-1}\sctx_{t-t',i} \sum_{k=0}^K h_{kt'} \sum_{r=0}^{k-1}\mtC^r\sclambda_i^{k-r-1}\mtE\vcv_i \label{eq:evi}
\end{split}
\end{align}
where $\sctx_{t-t',i}$ is the $i$-th entry of $\vctx_{t-t^\prime} = \mtV^{\Tr}\vcx_{t-t^\prime}$. We now expand the last term as
\begin{equation}\label{eq_split}
    \mtE\vcv_i = \mtB_i\delta\vcv_i + \delta\lambda_i\vcv_i + (\delta\lambda_i\mtI_N-\mtE)\delta\vcv_i
\end{equation}
where
\begin{equation*}
    \mtB_i = \lambda_i\mtI_N-\mtC, \quad \delta\vcv_i = \vchv_i - \vcv_i, \quad \delta\lambda_i = \hat\lambda_i - \lambda_i.
\end{equation*}
Then, upon substituting \eqref{eq_split} into \eqref{eq:evi}, we get
\begin{align}
    \sum_{i=0}^{N-1}\sum_{t'=0}^{T-1}\sctx_{t-t',i} \sum_{k=0}^K h_{kt'} \sum_{r=0}^{k-1}\mtC^r\sclambda_i^{k-r-1}\mtB_i\delta\vcv_i  \label{eq:t1} \\
    +\sum_{i=0}^{N-1}\sum_{t'=0}^{T-1}\sctx_{t-t',i} \sum_{k=0}^K h_{kt'} \sum_{r=0}^{k-1}\mtC^r\sclambda_i^{k-r-1}\delta\lambda_i\vcv_i  \label{eq:t2} \\
    +\sum_{i=0}^{N-1}\sum_{t'=0}^{T-1}\sctx_{t-t',i} \sum_{k=0}^K h_{kt'} \sum_{r=0}^{k-1}\mtC^r\sclambda_i^{k-r-1}(\delta\lambda_i\mtI_N-\mtE)\delta\vcv_i. \label{eq:t3}
\end{align}
In the remaining part of the proof, we proceed with upper-bounding each term individually.

\subsubsection{First term \eqref{eq:t1}.}
We note that $\mtB_i = \lambda_i\mtI_N-\mtC = \mtV(\lambda_i\mtI_N-\mtLambda)\mtV^\Tr$. Plugging this into \eqref{eq:t1}, we get
\begin{align}\label{eq.t1expand}
\begin{split}
    \sum_{i=0}^{N-1}\sum_{t'=0}^{T-1}\sctx_{t-t',i} \sum_{k=0}^K h_{kt'} \sum_{r=0}^{k-1}\mtC^r\sclambda_i^{k-r-1}\mtV(\lambda_i\mtI_N-\mtLambda)\mtV^\Tr\delta\vcv_i  \\
    = \sum_{i=0}^{N-1}\sum_{t'=0}^{T-1}\sctx_{t-t',i} \sum_{k=0}^K h_{kt'} \sum_{r=0}^{k-1}\sclambda_i^{k-r-1}\mtV\mtLambda^r(\lambda_i\mtI_N-\mtLambda)\mtV^\Tr\delta\vcv_i \\
    = \sum_{i=0}^{N-1}\sum_{t'=0}^{T-1}\sctx_{t-t',i} \mtV\mtL_{it'}\mtV^\Tr(\vchv_i-\vcv_i)
\end{split}
\end{align}
where $\mtL_{it'}$ is a diagonal matrix whose $j$-th diagonal element is 0 if $i=j$ and for if $i\neq j$ it is
\begin{align}
\begin{split}
    \mtL_{it',j} &= \sum_{k=0}^K h_{kt'} \sum_{r=0}^{k-1}\sclambda_i^{k-r-1}\sclambda_j^r(\lambda_i-\lambda_j)
    =  \sum_{k=0}^K h_{kt'} \frac{\sclambda_i^k-\sclambda_j^k}{\lambda_i-\lambda_j}(\lambda_i-\lambda_j) \\
    &= \sum_{k=0}^K h_{kt'}\sclambda_i^k - \sum_{k=0}^K h_{kt'}\sclambda_j^k = h_{t'}(\lambda_i) - h_{t'}(\lambda_j).
\end{split}
\end{align}
Here, $h_{t'}(\lambda)$ is the graph frequency response of filter $\mtH_{\sct'}(\mtC):=\sum_{\sck=0}^\scK \sch_{\sck\sct'}\mtC^\sck$.
Therefore, 
\begin{align}
    [\mtL_{it'}\mtV^\Tr(\vchv_i-\vcv_i)]_j=\begin{cases}
			0, & \text{if $i=j$}\\
                (h_{t'}(\lambda_i) - h_{t'}(\lambda_j))\vcv_j^\Tr\vchv_i, & \text{if $i \neq j$}
		 \end{cases}
\end{align}
Taking the norm of \eqref{eq.t1expand}, we get 
\begin{align}
\begin{split}
    \left\|\sum_{i=0}^{N-1}\sum_{t'=0}^{T-1}\sctx_{t-t',i} \mtV\mtL_{it'}\mtV^\Tr(\vchv_i-\vcv_i)\right\| \le  \sum_{i=0}^{N-1}\sum_{t'=0}^{T-1}|\sctx_{t-t',i}| \|\mtV\|~\|\mtL_{it'}\mtV^\Tr(\vchv_i-\vcv_i)\| \leq \\
    \sqrt{N}\sum_{i=0}^{N-1}\sum_{t'=0}^{T-1}|\sctx_{t-t',i}|\max_j|h_{t'}(\lambda_i) - h_{t'}(\lambda_j)||\vcv_j^\Tr\vchv_i| 
\end{split}
\end{align}
where the first inequality derives from the triangle and Cauchy-Schwarz inequalities and the second one holds from $\|\mtV\| = 1$ alongside $\|\vcy\|\leq\sqrt{N}\max_{i=1}y_i$ for an arbitrary vector $\vcy \in \real^N$.
We now leverage the result from \cite[Theorem~4.1]{loukas2017howclose} to characterize the dot product of the eigenvectors of true and sample covariance matrixes under As.~\ref{as_eig_diff} as
\begin{align}\label{eq.event1}
    \mathbb{P}(|\vcv_j^\Tr\vchv_i|\geq B) \leq \frac{1}{t}\left(\frac{2k_j}{B|\lambda_i-\lambda_j|}\right)^2
\end{align}
where $k_j=\left( \mathbb{E}[||\vcx_t\vcx_t^\Tr\vcv_j||^2_2]-\lambda_j^2 \right)^{1/2}$ is related to the kurtosis of the data distribution~\cite{loukas2017howclose,sihag2022covariance}.
By setting 
\begin{align}
    B = \frac{2k_je^{\epsilon/2}}{t^{1/2}|\lambda_i-\lambda_j|},
\end{align}
we get
\begin{align}
    \max_j|h_{t'}(\lambda_i) - h_{t'}(\lambda_j)||\vcv_j^\Tr\vchv_i| \leq \max_j\frac{|h_{t'}(\lambda_i) - h_{t'}(\lambda_j)|}{|\lambda_i-\lambda_j|}\frac{2k_je^{\epsilon/2}}{t^{1/2}}
\end{align}
with probability at least $1-e^{-\epsilon}$.

As per As.~\ref{as_lipschitz}, all filters' frequency responses $h_{t'}(\lambda)$ are Lipschitz with constant $P$. 
Therefore, the term in \eqref{eq:t1} is bounded as
\begin{align}
\begin{split}
\label{eq:first_term}
    \left\|\sum_{i=0}^{N-1}\sum_{t'=0}^{T-1}\sctx_{t-t',i} \sum_{k=0}^K h_{kt'} \sum_{r=0}^{k-1}\mtC^r\sclambda_i^{k-r-1}\beta_i\delta\vcv_i\right\| \leq \\
    \sum_{i=0}^{N-1}\sum_{t'=0}^{T-1}|\sctx_{t-t',i}| \frac{2P\sqrt{N} k_{\text{max}}e^{\epsilon/2}}{t^{1/2}} \leq \frac{2}{t^{1/2}}Pk_{\text{max}}e^{\epsilon/2}TN
\end{split}
\end{align}
with probability at least $1-e^{-\epsilon}$. Note that we leveraged $\|\vcx_{t}\|\leq1$ for all $t$, $\sum_{i=0}^{N-1} |\tilde{x}_{t,i}| \leq \sqrt{N}\|\vcx_{t}\|$, and defined $k_{\text{max}} := \max_jk_j$.

\medskip
\noindent\textbf{Second term \eqref{eq:t2}.}
We rewrite \eqref{eq:t2} as
\begin{align}
\begin{split}
    \sum_{i=0}^{N-1}\sum_{t'=0}^{T-1}\sctx_{t-t',i} \sum_{k=0}^K h_{kt'} \sum_{r=0}^{k-1}\mtC^r\sclambda_i^{k-r-1}\delta\lambda_i\vcv_i& = \sum_{i=0}^{N-1}\sum_{t'=0}^{T-1}\sctx_{t-t',i} \sum_{k=0}^K h_{kt'} \sum_{r=0}^{k-1}\sclambda_i^r\sclambda_i^{k-r-1}\delta\lambda_i\vcv_i \\
    &= \sum_{i=0}^{N-1}\sum_{t'=0}^{T-1}\sctx_{t-t',i} \sum_{k=0}^K kh_{kt'}\sclambda_i^{k-1}\delta\lambda_i\vcv_i \\
    &= \sum_{i=0}^{N-1}\sum_{t'=0}^{T-1}\sctx_{t-t',i} h'_{t'}(\lambda_i)\delta\lambda_i\vcv_i.
\end{split}
\end{align}
where $h'_{t'}(\lambda)$ is the derivative of $h_{t'}(\lambda)$ w.r.t. $\lambda$.
Taking the norm and applying standard inequalities, we get 
\begin{equation}
    \left\|\sum_{i=0}^{N-1}\sum_{t'=0}^{T-1}\sctx_{t-t',i} h'_{t'}(\lambda_i)\delta\lambda_i\vcv_i\right\| \le \sum_{i=0}^{N-1}\sum_{t'=0}^{T-1}|\sctx_{t-t',i}|~ |h'_{t'}(\lambda_i)|~|\delta\lambda_i|~\|\vcv_i\|.
\end{equation}
Here, we have that $\|\vcv_i\| = 1$, and that derivative of the filter $h'_{t'}(\lambda)$ is bounded by $P$ from As.~\ref{as_lipschitz}. We now proceed with bounding $|\delta\lambda_i|$. From Weyl's theorem~\cite[Theorem 8.1.6]{golub13}, we note that $\|\mtE\|\leq \alpha$ implies that $|\delta\lambda_i|\leq \alpha$ for any $\alpha > 0$.
Next, using the result from~\cite[Theorem 5.6.1]{vershynin2018high}, we have 
\begin{align}\label{eq.event2}
    \mathbb{P}\bigg(\|\mtE\| \leq \underbrace{Q\left( \sqrt{\frac{G^2N(\log N+u)}{t}}+\frac{G^2N(\log N+u)}{t} \right)\|\mtC\|}_{\alpha} \bigg) \geq 1-2e^{-u}.
\end{align}
where $Q$ is an absolute constant and $G\geq 1$ derives from As.~\ref{as_norm}. Finally, $|\tilde{x}_{t-t^\prime,i}|$ is handled via inequality $\sum_{i=0}^{N-1} |\tilde{x}_t| \leq \sqrt{N}\|\vcx_t\|$ and leveraging $\|\vcx_t\|\leq 1$. Putting all these together, we upper-bound the second term \eqref{eq:t2} by
\begin{align}
\label{eq:second_term}
\begin{split}
    \left\|\sum_{i=0}^{N-1}\sum_{t'=0}^{T-1}\sctx_{t-t',i} \sum_{k=0}^K h_{kt'} \sum_{r=0}^{k-1}\mtC^r\sclambda_i^{k-r-1}\delta\lambda_i\vcv_i\right\| \leq \\
    PT\sqrt{N}Q\left( \sqrt{\frac{G^2N(\log N+u)}{t}}+\frac{G^2N(\log N+u)}{t} \right)\|\mtC\|
\end{split}
\end{align}
with probability at least $1-2e^{-u}$.

\smallskip
\noindent\textbf{Third term \eqref{eq:t3}.} This term can be bounded by leveraging equations (65)-(68) in \cite{sihag2022covariance} with minimal changes and showing that $\|(\delta\lambda_i\mtI_N-\mtE)\delta\vcv_i\|$ scales as $\mathcal{O}(1/t)$.

Bringing together \eqref{eq:first_term}, \eqref{eq:second_term} and the observation that \eqref{eq:t3} scales as $\mathcal{O}(1/t)$ leads to the bound in \eqref{eq:tvnn_filter_stab} given events \eqref{eq.event1} and \eqref{eq.event2} are independent. \qed

\begin{remark}
    Note that events \eqref{eq:first_term} and \eqref{eq:second_term} are generally dependent as both rely on the underlying and sample covariance matrices. However, they are coupled in a nontrivial way that brings in eigenspace alignments and thus it is challenging to quantify the exact probability for which the bound in \eqref{eq:tvnn_filter_stab} holds. If these events negatively affect each other the probability that the bound in \eqref{eq:tvnn_filter_stab} holds will be smaller than $(1-e^{-\epsilon})(1-2e^{-u})$, but if the events positively affect each other the bound in \eqref{eq:tvnn_filter_stab} may hold with a higher probability. These considerations, however, do not involve the actors that appear in the bound, whose contribution to the model stability remains an insightful analysis.
\end{remark}

\section{Proof of \Cref{pr:pca_stability}}
\label{app:pca_stability}
We consider w.l.o.g. $\|\vcx\|\leq 1$. We have that 
\begin{align}
\begin{split}
\label{eq:initialeqbound}
    \left\|\mtV^\Tr\vcx - \mthV^\Tr\vcx\right\| &= \left\|\sum_{i=0}^{N-1} (\vcv_i-\vchv_i)^\Tr x_i\right\| \leq \sum_{i=0}^{N-1} \|\vcv_i-\vchv_i\||x_i| \\
    &\le \sum_{i=0}^{N-1} \|\vcv_i-\vchv_i\| \leq N \max_i\|\vcv_i-\vchv_i\|
\end{split}
\end{align}
where we used triangle inequality and the fact that $\|(\vcv_i-\vchv_i)^\Tr\| = \|\vcv_i-\vchv_i\|$.
Now, we split the difference $\delta\vcv_i = \vcv_i-\vchv_i$ into a component parallel to $\vcv_i$ and another perpendicular to $\vcv_i$: $\delta\vcv_i = \delta\vcv_{i\parallel} + \delta\vcv_{i\perp}$.
The eigenvectors of a covariance matrix (i.e., both $\mtV$ and $\mthV$) are an orthonormal basis of $\mathbb{R}^N$; thus, the perpendicular component of the error $\delta\vcv_{i\perp}$ is the sum of the projections of $\vchv_i$ on all the columns of $\mtV$ except $\vcv_i$, i.e., 
\begin{align}
    \delta\vcv_{i\perp} = \sum_{j=0,j\neq i}^{N-1} (\vchv_i^\Tr\vcv_j)\vcv_j.
\end{align}
If we take the norm, we get
\begin{align}
\|\delta\vcv_{i\perp}\| = \sqrt{\sum_{j=0,j\neq i}^{N-1} (\vchv_i^\Tr\vcv_j)^2} \leq \sqrt{N-1} \max_{j, j\neq i}|\vchv_i^\Tr\vcv_j|.
\end{align}
The norm of the parallel component is, instead, 
\begin{align}
\label{eq:parallel}
    \|\delta\vcv_{i\parallel}\| = \big|~\|\vcv_i\|- \vchv_i^\Tr\vcv_i~\big|.
\end{align}
Since eigenvectors are invariant to change of sign (i.e., if $\vcv_i$ is an eigenvector, $-\vcv_i$ is equivalently an eigenvector), we assume w.l.o.g. $\vchv_i^\Tr\vcv_i\geq0$.
Therefore, we have 
\begin{equation}
    \|\delta\vcv_{i\parallel}\| = \big|~\|\vcv_i\|- \vchv_i^\Tr\vcv_i~\big| = \|\vcv_i\|- |\vchv_i^\Tr\vcv_i|.
\end{equation}

Since $\|\mthV\|$ and $\|\mtV\|$ are orthonormal bases, we have that $\|\vcv_i\| = \|\vchv_i\| = 1$ and we can express $\vchv_i$ in terms of its projections on the directions in $\mtV$, i.e., $\vchv_i = \sum_{j=0}^{N-1} (\vchv_i^\Tr\vcv_j)\vcv_j$. 
Therefore, if we take the norm, we have that $\|\vchv_i\| = \sqrt{\sum_{j=0}^{N-1} (\vchv_i^\Tr\vcv_j)^2} = 1$.  We can use this expression to write 
\begin{equation}
\label{eq:squareroot}
    |\vchv_i^\Tr\vcv_i| = \sqrt{1 - \sum_{j=0,j\neq i}^{N-1} (\vchv_i^\Tr\vcv_j)^2} = 1 - \frac{1}{2}\sum_{j=0,j\neq i}^{N-1} (\vchv_i^\Tr\vcv_j)^2 - \mathcal{O}\left( \left(\sum_{j=0,j\neq i}^{N-1}(\vchv_i^\Tr\vcv_j)^2\right)^2\right)
\end{equation}
where we used the Taylor expansion of the square root as its argument approaches 1, i.e., the eigenvector approximations get closer to the true eigenvectors.

Plugging \eqref{eq:squareroot} in \eqref{eq:parallel} and using the fact that $\|\vcv_i\|$ = 1, we get

\begin{align}
    \|\delta\vcv_{i\parallel}\| \leq \|\vcv_i\| - 1 + \frac{1}{2}\sum_{j=0,j\neq i}^{N-1} (\vchv_i^\Tr\vcv_j)^2 + \mathcal{O}\left( \left(\sum_{j=0,j\neq i}^{N-1}(\vchv_i^\Tr\vcv_j)^2\right)^2\right) = \\ 
    \frac{1}{2}\sum_{j=0,j\neq i}^{N-1} (\vchv_i^\Tr\vcv_j)^2 + \mathcal{O}\left( \left(\sum_{j=0,j\neq i}^{N-1}(\vchv_i^\Tr\vcv_j)^2\right)^2\right) \leq \\
    \frac{1}{2}(N-1) \max_{j, j\neq i}|\vchv_i^\Tr\vcv_j|^2 + \mathcal{O}\left( \max_{j, j\neq i}|\vchv_i^\Tr\vcv_j|^4 \right).
\end{align}

Now, we can rewrite the norm of $\delta\vcv_i$ as
\begin{align}
\begin{split}
\label{eq:differencebound}
    \|\delta\vcv_i\| \leq \|\delta\vcv_{i\perp}\| + \|\delta\vcv_{i\parallel}\| \leq \\\
    \sqrt{N-1} \max_{j, j\neq i}|\vchv_i^\Tr\vcv_j| + \frac{1}{2}(N-1) \max_{j, j\neq i}|\vchv_i^\Tr\vcv_j|^2 + \mathcal{O}\left( \max_{j, j\neq i}|\vchv_i^\Tr\vcv_j|^4 \right).
    \end{split}
\end{align}

We now use the result from \cite[Theorem~4.1]{loukas2017howclose} that holds under As.~\ref{as_eig_diff}:
\begin{align}
    \mathbb{P}(|\vcv_j^\Tr\vchv_i|\geq B) \leq \frac{1}{t}\left(\frac{2k_j}{B|\lambda_i-\lambda_j|}\right)^2
\end{align}
where $k_j=\left( \mathbb{E}[||\vcx_t\vcx_t^\Tr\vcv_j||^2_2]-\lambda_j^2 \right)^{1/2}$ is related to the kurtosis of the data distribution~\cite{loukas2017howclose,sihag2022covariance}. If we set 
\begin{align}
    B = \frac{2k_je^{\epsilon/2}}{t^{1/2}|\lambda_i-\lambda_j|},
\end{align}
we get 
\begin{align}
    \label{eq:dotproductbound}
    \mathbb{P}\left(|\vcv_j^\Tr\vchv_i|\leq \frac{1}{\sqrt{t}}\frac{2k_je^{\epsilon/2}}{|\lambda_i-\lambda_j|}\right) \geq 1-e^{-\epsilon}.
\end{align}
Therefore, by plugging the results of \eqref{eq:dotproductbound} and \eqref{eq:differencebound} into \eqref{eq:initialeqbound}, we obtain that the following bound holds with probability at least $1-e^{-\epsilon}$:

\begin{align}
||\mtV^\Tr\vcx - \mtU^\Tr\vcx|| \leq  \frac{2N}{\sqrt{t}}\sqrt{N-1}e^{\epsilon/2}\max_{i,j\neq i}\frac{k_j}{|\lambda_i-\lambda_j|} + \mathcal{O}\left(\frac{1}{t}\right),
\end{align}
where $\mathcal{O}(1/t)$ collects the term related to $|\vchv_i^\Tr\vcv_j|^2$ and $|\vchv_i^\Tr\vcv_j|^4$ in \eqref{eq:differencebound}. 
\qed

\section{Synthetic Dataset Generation}
\label{app:datasets}
\subsection{Stationary datasets}
Given a covariance matrix $\mtC$, we generate stationary datasets by first sampling observations $\vcz_t \sim \mathcal{N}(\textbf{0}, \mtC)$ and then enforcing temporal causality by creating the multivariate time series
temporal observations $\vcx_t$ as $\vcx_t = \sum_{t'=0}^{\tau} h_{t'}\vcz_{t-t'}$, where $h_{t'} = h'_{t'}/\sqrt{\sum_{t}h'_t}$ and $h'_{t} = e^{-t}$ for $t=0,\dots,\tau$. In our experiments we set $\tau=9$.
To control the covariance eigenvalues distribution in the synthetic dataset, we generate different covariances through the function \lstinline{sklearn.make_regression}, that allows to set the size of the distribution tail and, consequently, control how close the eigenvalues are. \Cref{fig:eig_distr} shows the eigenvalue distribution for the three covariance matrixes that we use to generate our synthetic datasets. Bigger tail sizes correspond to higher kurtosis and, ultimately, closer eigenvalues.

\begin{figure}[h]
     \centering
     \includegraphics[width=0.5\textwidth]{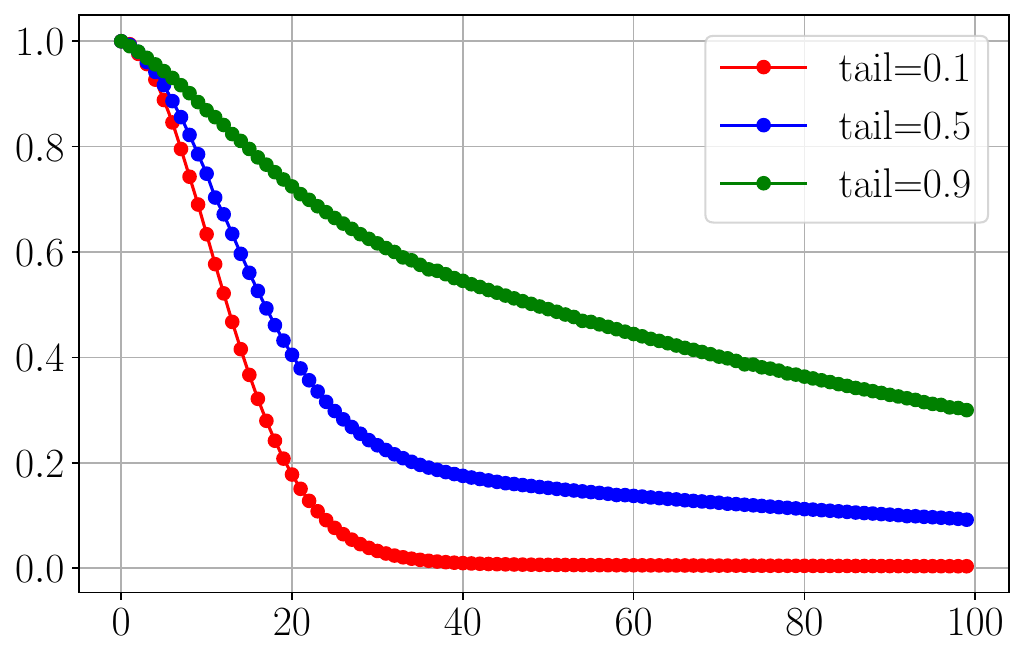}
     \caption{Eigenvalue distribution for different tail sizes. }
     \label{fig:eig_distr}
\end{figure}

\subsection{Non-stationary datasets}
For non-stationary datasets, instead, we use a first-order autoregressive process $\vcx_t = \alpha_t\vcx_{t-1}+\vceps$, where $\vceps$ is a random vector whose $i$-th component is $\vceps_i \sim \mathcal{N}(0,1)$ and we vary $\alpha_t$ every 1000 samples. We generate the first sample as $\vcx_0 \sim \mathcal{N}(\textbf{0}, \mtC)$, where $\mtC$ is generated using \lstinline{sklearn.make_regression} with tail strength 0.1. We generate 4000 samples as training set using $\alpha_t=0.5$ and, in the test set, we change the values of $\alpha_t$ every 1000 samples in the following order: 0.1, 0.4, 0.6, 0.1, 0.3, 0.6.

\section{Experimental setup}
\label{app:exp_setup}

We optimize the following hyperparameters for STVNN through a search.
\begin{itemize}
    \item Number of layers: \{1,2,3\}
    \item Feature size per-layer: \{8,16,32,64,128\}
\item Learning rate: \{0.001, 0.0001\}
\item Optimizer: \{Adam, SGD\}
\item Order of graph filters $K$: \{1,2,3\}
\item $\gamma$: \{0.01,0.1,0.3\}
\item $T$: \{2,3,5,8\}
\end{itemize}
The non-linearity in STVNN is LeakyReLU with negative slope 0.1. 
To prevent numerical issues, since we observe the principal eigenvalue of the estimated covariance matrix to be large for some datasets, we employ the trace-normalized estimated covariance matrix (i.e., $\mttC_t=\Cestt/\trace(\Cestt)$) as graph shift operator in our architectures.
We train models for 40 epochs. We split datasets into train/validation/test sets of size 20\%/10\%/70\% to simulate a streaming data setting.
For VNN and VNN-LSTM, we use the best hyperparameter configuration of STVNN per dataset. For TPCA and LSTM, we only optimize $T$.
We write our code in Python and we use PyTorch for deep learning models and optimization.

For experiments on stability, we use the best parameter configuration on real datasets and, on synthetic datasets, we use 2 layers of size \{32,16\} and $K=2$. TPCA has $T=2$ on all stability experiments. We use Adam for training and SGD for online updates.
For experiments on non-stationary synthetic datasets, we use the same configuration as for stability and we set $\gamma=0.1$. We use Adam for training and SGD for online updates.
For all experiments on synthetic benchmarks we average results over 10 different datasets generated with the same parameters. 

\section{Forecasting results}
\label{app:mts_forecasting_full}

\Cref{tab:forecasting_full} reports a more comprehensive view of the results for multivariate time series forecasting experiments in \Cref{tab:forecasting} by showing also the Mean Squared Error (MSE) and Mean Absolute Error (MAE) in addition to the symmetric Mean Absolute Percentage Error (sMAPE). Overall, we see a similar trend as reported in \Cref{tab:forecasting}.

\begin{table}[t]
\caption{MTS forecasting results on real datasets. Best results are in \textbf{bold}, second best are in \textit{\underline{italic}}. On Exchange Rate, MSE is $\times10^{-4}$ and MAE is $\times10^{-2}$.}
\label{tab:forecasting_full}
\centering
\resizebox{\linewidth}{!}{
\begin{tabular}{c|c|ccc|ccc|ccc}
\toprule
 & Steps & \multicolumn{3}{|c|}{1} & \multicolumn{3}{|c|}{3} & \multicolumn{3}{|c}{5}  \\
 \midrule
 & & MSE & MAE & sMAPE (\%) & MSE & MAE & sMAPE (\%) & MSE & MAE & sMAPE (\%) \\
\midrule
\multirow{5}{*}{\rotatebox[origin=c]{90}{\textbf{NOAA}}} & LSTM & 2.94$\pm$0.12 & 1.17$\pm$0.05 & 1.98$\pm$0.07 & 7.04$\pm$0.46 & \textit{\underline{1.85}}$\pm$0.06 & \textbf{3.10}$\pm$0.10 & 9.03$\pm$0.81 & \textbf{2.02}$\pm$0.10 & \textbf{3.46}$\pm$0.18 \\
& TPCA & 3.71$\pm$0.33 & 1.34$\pm$0.04 & 2.42$\pm$0.05 & 8.84$\pm$0.30 & 2.18$\pm$0.04 & 3.93$\pm$0.06 & 14.9$\pm$0.44 & 2.84$\pm$0.06 & 5.10$\pm$0.09 \\
& VNN & 1.98$\pm$0.09 & 1.00$\pm$0.02 & 1.71$\pm$0.04 & \textbf{5.64}$\pm$0.53 & \textbf{1.80}$\pm$0.07 & \textit{\underline{3.14}}$\pm$0.10 & \textit{\underline{8.91}}$\pm$1.20 & 2.28$\pm$0.17 & 4.03$\pm$0.34 \\
& VNNL & \textit{\underline{1.76}}$\pm$0.16 & \textit{\underline{0.97}}$\pm$0.05 & \textit{\underline{1.67}}$\pm$0.10 & 6.40$\pm$0.84 & 1.98$\pm$0.15 & 3.36$\pm$0.27 & 20.3$\pm$15.3 & 3.45$\pm$1.51 & 5.76$\pm$2.37 \\
& STVNN & \textbf{1.22}$\pm$0.14 & \textbf{0.79}$\pm$0.04 & \textbf{1.35}$\pm$0.06 & \textit{\underline{6.34}}$\pm$0.58 & \textit{\underline{1.85}}$\pm$0.06 & 3.21$\pm$0.07 & \textbf{8.37}$\pm$0.14 & \textit{\underline{2.13}}$\pm$0.16 & \textit{\underline{3.71}}$\pm$0.25 \\
\midrule
\multirow{5}{*}{\rotatebox[origin=c]{90}{\textbf{Molene}}} & LSTM & 1.27$\pm$0.01 & 0.80$\pm$0.00 & 0.29$\pm$0.00 & 3.35$\pm$0.06 & 1.30$\pm$0.02 & 0.47$\pm$0.01 & 5.09$\pm$0.12 & 1.67$\pm$0.01 & 0.60$\pm$0.01 \\
& TPCA & 1.73$\pm$0.03 & 0.97$\pm$0.01 & 0.35$\pm$0.00 & 2.99$\pm$0.10 & 1.29$\pm$0.02 & 0.46$\pm$0.01 & \textit{\underline{4.20}}$\pm$0.03 & \textbf{1.57}$\pm$0.01 & \textbf{0.56}$\pm$0.00 \\
& VNN & \textit{\underline{0.57}}$\pm$0.02 & \textbf{0.56}$\pm$0.02 & \textbf{0.20}$\pm$0.01 & 2.47$\pm$0.10 & 1.19$\pm$0.03 & 0.43$\pm$0.01 & 5.58$\pm$0.78 & 1.81$\pm$0.10 & 0.64$\pm$0.04 \\
& VNNL & 0.62$\pm$0.02 & 0.58$\pm$0.01 & 0.21$\pm$0.00 & \textit{\underline{2.28}}$\pm$0.06 & \textit{\underline{1.14}}$\pm$0.01 & \textit{\underline{0.41}}$\pm$0.01 & 4.49$\pm$0.02 & 1.67$\pm$0.00 & 0.59$\pm$0.00 \\
& STVNN & \textbf{0.58}$\pm$0.04 & \textbf{0.56}$\pm$0.02 & \textbf{0.20}$\pm$0.01 & \textbf{2.03}$\pm$0.03 & \textbf{1.06}$\pm$0.01 & \textbf{0.38}$\pm$0.03 & \textbf{4.19}$\pm$0.23 & \textbf{1.57}$\pm$0.05 & \textbf{0.56}$\pm$0.02 \\
\midrule
\multirow{5}{*}{\rotatebox[origin=c]{90}{\textbf{Exchange}}} & LSTM & 2.10$\pm$0.07 & 0.88$\pm$0.01 & 1.25$\pm$0.02 & 2.51$\pm$0.17 & 0.95$\pm$0.02 & 1.33$\pm$0.03 & 2.75$\pm$0.06 & 1.00$\pm$0.01 & 1.39$\pm$0.02 \\
& TPCA & 4.03$\pm$0.93 & 1.16$\pm$0.11 & 1.73$\pm$0.18 & 3.58$\pm$0.10 & 1.14$\pm$0.01 & 1.66$\pm$0.02 & 4.07$\pm$0.01 & 1.21$\pm$0.02 & 1.75$\pm$0.04 \\
& VNN & 0.84$\pm$0.03 & 0.52$\pm$0.01 & 0.70$\pm$0.01 & 1.48$\pm$0.06 & 0.72$\pm$0.01 & \textit{\underline{0.98}}$\pm$0.02 & 2.04$\pm$0.01 & 0.86$\pm$0.01 & 1.19$\pm$0.01 \\
& VNNL & \textit{\underline{0.71}}$\pm$0.02 & \textit{\underline{0.49}}$\pm$0.00 & \textit{\underline{0.68}}$\pm$0.01 & \textit{\underline{1.32}}$\pm$0.03 & \textit{\underline{0.70}}$\pm$0.01 & 1.00$\pm$0.01 & \textbf{1.73}$\pm$0.01 & \textbf{0.81}$\pm$0.01 & \textit{\underline{1.16}}$\pm$0.02 \\
& STVNN & \textbf{0.69}$\pm$0.02 & \textbf{0.48}$\pm$0.01 & \textbf{0.65}$\pm$0.01 & \textbf{1.31}$\pm$0.04 & \textbf{0.68}$\pm$0.01 & \textbf{0.94}$\pm$0.01 & \textit{\underline{1.81}}$\pm$0.02 & \textit{\underline{0.82}}$\pm$0.01 & \textbf{1.11}$\pm$0.01 \\
\bottomrule
\end{tabular}}
\end{table}

\end{document}